\newtheorem{theorem}{Theorem}
\newtheorem{lemma}[theorem]{Lemma}
\newcommand{\name}{ED-Batch\xspace}
\icmltitlerunning{ED-Batch: Efficient Automatic Batching of Dynamic Neural Networks via Learned Finite State Machines}
\begin{document}
\setlength{\textfloatsep}{3pt}
\twocolumn[
\icmltitle{ED-Batch: Efficient Automatic Batching of Dynamic Neural Networks\\ via Learned Finite State Machines}



\icmlsetsymbol{equal}{*}

\begin{icmlauthorlist}
\icmlauthor{Siyuan Chen}{pku}
\icmlauthor{Pratik Fegade}{cmu}
\icmlauthor{Tianqi Chen}{cmu,octoml}
\icmlauthor{Phillip B. Gibbons}{cmu}
\icmlauthor{Todd C. Mowry}{cmu}
\end{icmlauthorlist}

\icmlaffiliation{pku}{Peking University}
\icmlaffiliation{cmu}{Carnegie Mellon University}
\icmlaffiliation{octoml}{OctoML}

\icmlcorrespondingauthor{Siyuan Chen}{chensiyuan@pku.edu.cn}

\icmlkeywords{Machine Learning, ICML}

\vskip 0.3in
]



\printAffiliationsAndNotice{}  
\begin{abstract}
    Batching has a fundamental influence on the efficiency of deep neural network (DNN) execution. However, for dynamic DNNs, efficient batching is particularly challenging as the dataflow graph varies per input instance. As a result, state-of-the-art frameworks use heuristics that result in suboptimal batching decisions. Further, batching puts strict restrictions on memory adjacency and can lead to high data movement costs. In this paper, we provide an approach for batching dynamic DNNs based on finite state machines, which enables the automatic discovery of batching policies specialized for each DNN via reinforcement learning. Moreover, we find that memory planning that is aware of the batching policy can save significant data movement overheads, which is automated by a PQ tree-based algorithm we introduce. Experimental results show that our framework speeds up state-of-the-art frameworks by on average 1.15x, 1.39x, and 2.45x for chain-based, tree-based, and lattice-based DNNs across CPU and GPU.
    \vspace{-3ex}
\end{abstract}


\section{Introduction}

Batching accelerates the training and inference for deep neural networks (DNN) because it (1) launches fewer kernels resulting in lower kernel launch and scheduling overhead on the CPU, and (2) better utilizes the hardware by exploiting more parallelism. For static DNNs, i.e. DNNs whose dataflow graphs (a.k.a., computation graphs) are identical across every input instance, batched execution is trivial as one can batch corresponding operations for each input together. However, DNNs used to model structured data such as trees~\cite{treenn}, grids~\cite{gatedRNN}, and lattices~\cite{LatticeLSTM} in applications like natural language processing and speech recognition, exhibit dynamism in the network structure. In other words, the dataflow graph for these DNNs varies for each input instance. As a result, batching is a non-trivial problem for these DNNs.

Due to the presence of dynamism, batching for dynamic DNNs cannot be done during compilation. As a result, past work on the efficient execution of dynamic DNNs focused on two directions: (1) to enable operation-level batching at runtime~\cite{tffold, dynet,jit-batching}, i.e. \textit{dynamic batching}, and (2) to extract static parts~\cite{cavs}, i.e.~the static subgraphs (e.g. the LSTM cell), from the dataflow graph and optimize them during compilation~\cite{cortex,acrobat}. Because of strict runtime constraints, the former approach relies on simple heuristics to guide batching, leading to suboptimal performance. In the latter approach, techniques dedicated to certain control flow patterns or subgraphs are used for optimization, which is difficult to automate and requires developers with strong expertise in optimizing new applications. 

Further, due to the dynamic and runtime nature of past techniques, past work is unable to optimize inter-tensor memory layouts during compilation. Past solutions, thus, either emit gather/scatter operations before and after each batch~\cite{cavs,dynet} or rely on specially designed and/or hand-optimized kernels to operate on scattered data in-place~\cite{cortex,acrobat}, thus precluding the use of highly-optimized vendor libraries on common hardware.

To address these problems, we propose \name~(\underline{E}fficient \underline{D}ynamic \underline{Batch}ing), an efficient automatic batching framework for dynamic neural networks via learned finite state machines (FSM) and batching-aware memory planning. 

For dynamic batching, we exploit the insight that the optimal batching policy for a wide variety of dynamic DNNs can be represented by an FSM, where each state represents a set of possible operator types on the frontier of the dataflow graph.
Unlike the previous algorithms that depend heavily on aggregated graph statistics to guide batching and result in highly suboptimal decisions, our FSM approach learns which decisions are better by examining the entire graph.
We find that FSMs represents a sweet-spot between expressiveness of batching choices (the same choice for the same state, leveraging the regularity in network topology for a given input) and efficiency. 
Further, we adopt a reinforcement-learning (RL) algorithm to learn the FSM from scratch. To guide the training of RL, we design a reward function inspired by a sufficient condition for the optimal batching policy. 

For the static subgraphs of the dynamic DNN, we take a general approach to optimize it by memory-efficient batching. Our key insight is that the memory operations can be significantly minimized by better planning the inter-tensor memory layouts after batching, which we perform using a novel PQ tree-based~\cite{PQTree} algorithm that we have designed.

In summary, this paper makes the following contributions:

\begin{itemize}[topsep=0pt, leftmargin=1.2em, itemsep=-0.5ex]
\item We propose an FSM-based batching algorithm to batch dynamic DNNs that finds near-optimal batching policy.
\item We design a PQ tree-based algorithm with almost linear complexity to reduce memory copies introduced by dynamic batching.
\item We compare the performance of \name with state-of-the-art dynamic DNNs frameworks on eight workloads and achieved on average 1.15x, 1.39x, and 2.45x speedup for chain-based, tree-based, and lattice-based networks across CPU and GPU. We will make the source code for \name publicly available.
\end{itemize}





\section {FSM-based Algorithm for Dynamic Batching} \label{dynamic-batching}
In this section, we identify the shortcomings of current batching techniques, propose a new FSM-based dynamic batching algorithm, and the mechanism to learn it by RL. 

\subsection {Problem Characterization}

\textit{Dynamic batching} was initially proposed in TensorFlow Fold~\cite{tffold} and DyNet~\cite{dynet-batching} to enable batched execution of operations for dynamic DNNs. Specifically, given a mini-batch of input instances, dataflow graphs are generated for each of the input instances in the mini-batch and each operation is given a type (indicating operation class, tensor shape, etc.). Upon execution, the runtime identifies batching opportunities within the dataflow graphs by executing operations of the same type together. Therefore, the batching algorithm cannot have high complexity to avoid severe runtime overhead. However, the problem of minimizing the number of launched (batched) kernels is an NP-hard problem with no constant approximation algorithm\footnote{Proved by reducing from the \textit{shortest common supersequence} problem ~\cite{SCS} in Appendix.\ref{the: np-hardness for batching}.}, making the batching problem extremely challenging. As a result, the heuristics used for dynamic batching in the current systems often find a suboptimal policy. As shown later (Fig.\ref{fig: kernel launch}), the number of batches executed by current frameworks can be cut by up to 3.27 times.

\begin{algorithm}[tb]
   \caption{FSM-based Dynamic Batching}
   \label{alg:Dynamic Batching}
\begin{algorithmic}[1]
   \State {\bfseries Input:} Dataflow Graph $G$, State Encoding Function $E$, Policy $\pi$;
   \While{G.notEmpty()}
   \State nextType = $\pi(E(G))$
   \State batch = [v for v in Frontier(G) if v.type is nextType]
   \State Execute batch.
   \State Update the Frontier.
   \EndWhile
\end{algorithmic}
\end{algorithm}

\begin{figure}[ht]
\vspace{-1ex}
\centering
\centerline{\includegraphics[width=\columnwidth]{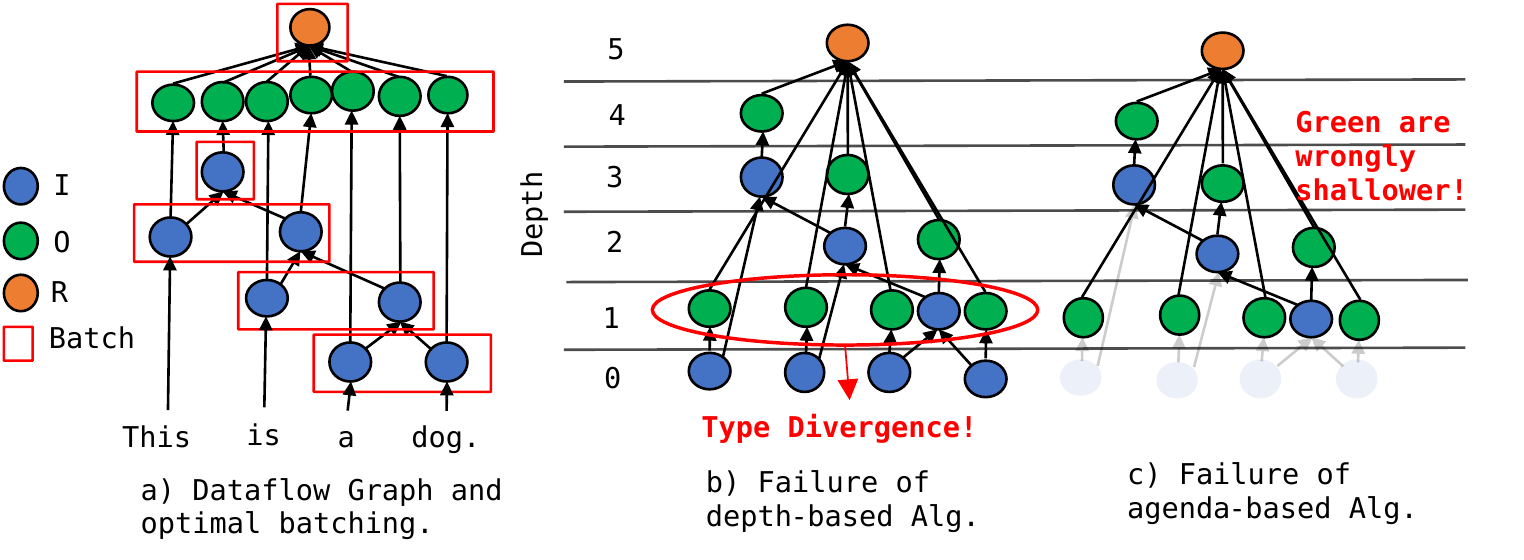}}
\vspace{-0.15in}
\caption{Example on current dynamic batching algorithms.}
\label{fig1:example}
\end{figure}

\begin{figure*}[ht]
\centering
\includegraphics[width=\textwidth]{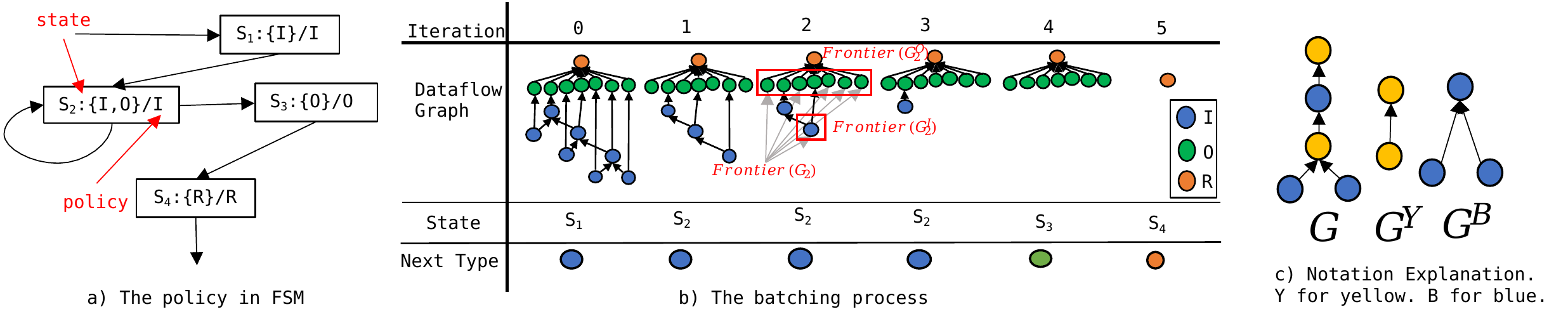}
\vspace{-0.35in}
\caption{Dynamic Batching Policy by FSM}
\vspace{-0.15in}
\label{fig:state machine example}
\end{figure*}

Specifically, previous state-of-the-art algorithms use heuristics depending on aggregated graph statistics to guide batching. The \textit{depth-based algorithm} in Tensorflow Fold~\cite{tffold} batches operations with the same type at the same topological depth (the input operation to the network has depth 0). And the \textit{agenda-based algorithm} in DyNet~\cite{dynet-batching} executes operations of the type with minimal average topological depth iteratively. However, topological depth cannot always capture the regularity of the dataflow graph and result in sub-optimal batching choices. Fig.~\ref{fig1:example}(a) shows a dataflow graph of the tree-based network, which builds upon the parse tree of a sentence with three types of operations: internal nodes ($I$), output nodes ($O$), and reduction nodes ($R$). The ideal batching policy executes all $O$ nodes in one batch. However, the depth-based algorithm in Fig.~\ref{fig1:example}(b) executes the $O$ nodes in four batches because they have different topological depths. For the agenda-based algorithm, consider the scenario when it is deciding the next batch after batching the $I$ nodes first (Fig.~\ref{fig1:example}(c)). Because the $O$ nodes have a lower average depth ($\overline{Depth}=(1+1+1+1+2+3+4)/7=1.85$) than the $I$ nodes ($\overline{Depth}=(1+2+3)/3=2$), the algorithm will pick the $O$ nodes for the next batch, resulting in an extra batch.

\subsection{FSM-based Dynamic Batching}
To fully overcome the limitation of specific graph statistics, we found that an FSM-based approach (1) offers the opportunity to specialize for network structure under a rich design space of potential batching policies and (2) can generalize to any number of input instances, as long as they share the same regularity in topology. 

Shown in Alg.\ref{alg:Dynamic Batching}, the FSM-based dynamic batching approach is an iterative process of choosing the operation type for the next batch. The process differs from the agenda-based algorithm only in how it computes the next type for batching (line 3). During each iteration, the next type is decided by first encoding the current dataflow graph $G$ into a state $S=E(G)$, and then using a policy $\pi$ to map the state into an operation type $t = \pi(S)$. Then, the operations of type $t$ on the frontier of the $G$ form the next batch. After they are executed, these operations are removed from $G$ and the next iteration begins.

For the model in Fig.~\ref{fig1:example}(a), an optimal FSM-based batching policy is shown in Fig.~\ref{fig:state machine example}(a), where we encode the dataflow graph by the set of types on the frontier. Fig.~\ref{fig:state machine example}(b) shows the batching process. From iterations 1 to 3, the dataflow graph is encoded into $S_2=\{I,O\}$, thus the policy continues to batch nodes of type $I=\pi(S_2)$, avoiding batching the $O$ nodes as past heuristics would do. At the same time, it is not hard to see that this FSM-based batching policy can be applied to batch multiple input instances of different parse trees.

\subsection{Using RL to Learn the FSM}
As the FSM provides us with the design space for potential batching policies, we need an algorithm to come up with the best batching policy specialized for a given network structure. In \name, we adopt an RL-based approach for the design-space-exploration and learn the best FSM by a reward function inspired by a sufficient condition for optimal batching. 

 In RL, an agent learns to maximize the accumulated reward by exploring the environment. At time $t$, the environment is encoded into a state $S_t$, and the agent takes action $a_t = \pi(S_t)$ following the policy $\pi$ and receives a reward $r_t = R(S_t, a)$. After this, the environment transforms to the next state $S_{t+1}$. This results in a sequence of states, actions, and rewards: $(S_0, a_0, r_0, S_1, a_1, r_1, ..., S_{N-1}, a_{N-1},r_{N-1}, S_N)$, where $N$ is the number of time steps and $S_N$ is an end state. The agent aims to maximize the accumulated reward $\Sigma_t r_t$ by updating the policy $\pi$. 
For FSM-based dynamic batching, the environment is the dataflow graph, which is encoded into states by the encoding function $E$. For every iteration, the agent decides on the type for the next batch, receives a reward on that decision, and the environment gets updated according to Alg.~\ref{alg:Dynamic Batching}. Now we elaborate on the state encoding, reward design, and training respectively.

Before that, we explain some notations. For a dataflow graph $G$, $G_t$ refers to its status at step $t$, $G^a$ refers to the extracted subgraph of $G$ composed solely of type $a$ operations (See example in Fig.\ref{fig:state machine example}(c)), $Frontier(G)$ refers to the set of ready-to-execute operations, $Frontier_a(G)$ refers to the subset of $Frontier(G)$ with type $a$.

\textbf{State Encoding:} The design of state encoding should be simple enough to avoid runtime overhead. In practice, we experimented with three ways of encoding: (1) $E_{base}(G) = \{v.type | v \in Frontier(G)\}$ is the set of operation types on the frontier, (2) $E_{max}(G) = (E_{base}(G), argmax_{t\in T}|Frontier_{t}(G)|)$ is $E_{base}(G)$ plus the most common type on the frontier and (3) $E_{sort}(G) = sort(\{v.type \in T| v \in Frontier(G)\}, t: |Frontier_{t}(G)|)$ is $E_1(G)$ sorted by the number of occurrences on the frontier. Empirically, we found that $E_{sort}$ was the best among the three (\S\ref{alg evaluation}). 

\textbf{Reward:} We design the reward to minimize the number of batches, thus increasing the parallelism exploited. The reward function is defined as 
\begin{equation}\label{reward function}
r(S_t, a_t) = -1 + \alpha * \frac{|Frontier(G^{a_t}_t)|}{|Frontier_{a_t}(G_t)|}
\end{equation} 

where $\alpha$ is a positive hyper parameter and $S_t = E(G_t)$. The constant $-1$ in the reward penalizes every additional batch, thereby helping us minimize the number of batches.

\begin{lemma}[Sufficient Condition for Optimal Batching]\footnote{Proof in the Appendix~\ref{reward proof}}\label{lemma: sufficient condition}
If $\frac{|Frontier(G^{a_t}_t)|}{|Frontier_{a_t}(G_t)|} = 1$, then there exists a shortest batching sequence starting with $a_t$. 
\end{lemma}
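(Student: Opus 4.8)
The plan is to prove the statement by an exchange argument: I will show that batching all currently-ready type-$a_t$ operations first never costs anything, so that some shortest schedule can be assumed to begin with $a_t$. First I would unpack the hypothesis. Because every dependency recorded in the induced subgraph $G^{a_t}_t$ is also a dependency in the full graph $G_t$, any type-$a_t$ operation that is ready in $G_t$ is automatically ready in $G^{a_t}_t$; hence $Frontier_{a_t}(G_t) \subseteq Frontier(G^{a_t}_t)$ always holds and the ratio is always at least $1$. The hypothesis ratio $=1$ is therefore equivalent to the set equality $Frontier_{a_t}(G_t) = Frontier(G^{a_t}_t)$, and $Frontier(G^{a_t}_t)$ is exactly the set of type-$a_t$ nodes having no unexecuted type-$a_t$ predecessor (the ``$a_t$-sources''). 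The real content of the hypothesis is thus: every $a_t$-source is \emph{fully} ready in $G_t$.

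Next I would fix any shortest schedule $\sigma = (c_0, c_1, \dots, c_{k-1})$, viewed as a sequence of batched types where each step executes all currently-ready operations of that type, and let $j$ be the index of its first type-$a_t$ batch; such $j$ exists because the chosen frontier $Frontier_{a_t}(G_t)$ is nonempty and every $a_t$-node must eventually be executed. The crucial intermediate claim is that the batch at step $j$ executes \emph{exactly} $Frontier_{a_t}(G_t)$: before step $j$ no $a_t$-operation has run, so any $a_t$-node ready at step $j$ still has all its $a_t$-predecessors unexecuted and must be an $a_t$-source; by the hypothesis every such source was already fully ready at step $t$, and conversely those sources stay ready because the intervening non-$a_t$ batches never remove them. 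This pins down what the first $a_t$-batch does independently of its position in the schedule.

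I would then form $\sigma' = (a_t, c_0, \dots, c_{j-1}, c_{j+1}, \dots, c_{k-1})$ by rotating the step-$j$ batch to the front, which keeps the length $k$, and argue that $\sigma'$ still executes the whole graph. The supporting tool is a monotonicity (domination) lemma: if two executions of the same type-sequence start from done-sets $D_1 \subseteq D_2$, then the resulting done-sets stay nested, since readiness with respect to the smaller set implies readiness with respect to the larger. Using this, running $c_0,\dots,c_{j-1}$ from the done-set $Frontier_{a_t}(G_t)$ (the $\sigma'$ ordering) dominates running them from the empty set (the $\sigma$ ordering); together with the fact that the moved batch contributes precisely $Frontier_{a_t}(G_t)$ in both orderings, the done-set of $\sigma'$ after $j+1$ batches contains that of $\sigma$. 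Because the two schedules then share the identical tail $c_{j+1},\dots,c_{k-1}$, one further application of the domination lemma shows $\sigma'$ completes every node within $k$ batches, giving a shortest sequence that starts with $a_t$.

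The step I expect to be the main obstacle is this rotation, precisely because the ``execute all ready nodes of the given type'' semantics is \emph{not} commutative: running the same multiset of batch-types in a different order can leave different sets of nodes executed. The two ingredients that defeat this obstacle are the exact characterization of the first $a_t$-batch (so that moving it forward changes nothing about its effect) and the monotonicity lemma (so that performing the useful $a_t$-work earlier can only help the later batches). I would also note the degenerate case where $G_t$ contains no type-$a_t$ operation, which is excluded here since the ratio in the hypothesis is well defined only when $|Frontier_{a_t}(G_t)| > 0$.
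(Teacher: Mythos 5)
Your proof is correct and takes essentially the same route as the paper's: both arguments identify the first $a_t$-batch in a shortest schedule, use the ratio-$1$ hypothesis to conclude that this batch coincides with $Frontier_{a_t}(G_t)$ and is therefore ready at the very start, and then rotate that batch to the front of the schedule. Your write-up is in fact more rigorous than the paper's brief contradiction argument, since you make explicit the two points the paper glosses over — the exact characterization of the first $a_t$-batch under the greedy ``execute all ready nodes of the chosen type'' semantics, and the monotonicity (domination) lemma guaranteeing that the rotated schedule still completes the whole graph in the same number of batches.
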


The second term is inspired by a sufficient condition for optimal batching (Lemma \ref{lemma: sufficient condition}) to prioritize the type that all operations in the frontier of the subgraph of this type are ready to execute. For the tree-based network, this term prioritizes the batching choice made by the optimal batching policy in Fig.~\ref{fig:state machine example}(a). For example, at iteration 2, this term is $\frac{5}{7}$ and $\frac{1}{1}$ for the $O$ and $I$ node respectively and the $I$ node is given higher priority for batching. For other networks, like the chained-based networks (Fig.~\ref{fig: kernel launch}), this sufficient condition continues to hold.

\textbf{Training:} We adopt the tabular-based Q-learning~\cite{q-learning} algorithm to learn the policy. And an N-step bootstrapping mechanism is used to increase the current batching choice's influence on longer previous steps. During the training phase, the algorithm learns a $Q$ function, which maps each state and action pair to a real number indicating its score, for every new topology. We found that this simple algorithm learns the policy $\pi$ within hundreds of trials (Table \ref{tab: RL Compile Time}). During inference, at each state $S$, we select the operation type with the highest $Q$ value for the next batch, i.e. $\pi(S) = argmax_a{Q(S, a)}$. This step is done by a lookup into stored $Q$ functions in constant time.

\section{Memory-efficient Batching for Static Subgraphs}\label{BasicBlockOpt}

\begin{figure}
\vskip 0.2in
\centering
\centerline{\includegraphics[width=\columnwidth]{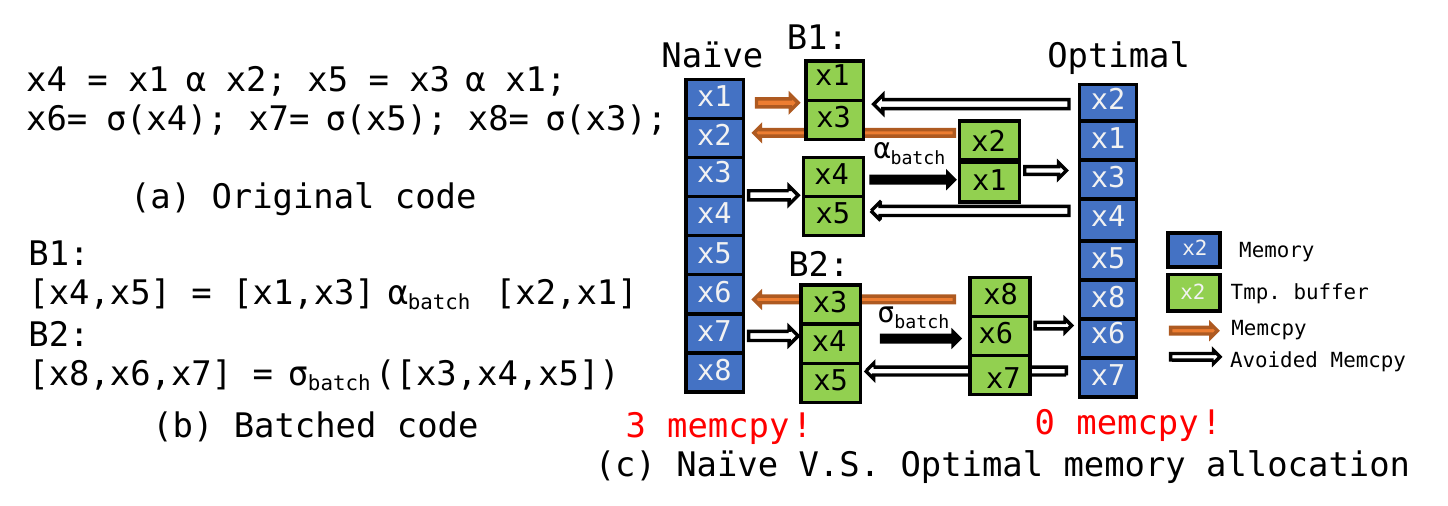}}
\vspace{-0.15in}
\caption{Example on memory allocation. $\alpha$, $\sigma$ represent operators.}
\label{fig: bb-opt example}
\end{figure}

\subsection{Background and Motivation}
In order to invoke a batched tensor operator in a vendor library, the source and result operands in each batch are usually required to be contiguous in memory (as per the vendor library specifications). Current batching frameworks such as Cavs and DyNet ensure this by performing explicit memory gather and/or scatter operations, leading to high data movement. On the other hand, as mentioned above, Cortex relies on specialized, hand-optimized batched kernels instead of relying on vendor libraries. This approach however, is unable to reuse the highly performant optimizations available as part of vendor libraries. 

In \name, we take a different approach to fit the memory layout into the batching policy, where operations in the source and result operands for batched execution are already contiguous in memory. 

We illustrate the approach by the example. Fig.~\ref{fig: bb-opt example}(a) shows an example code for a static subgraph and Fig.~\ref{fig: bb-opt example}(b) shows its batched version. In Fig.~\ref{fig: bb-opt example}(c), we compare two memory layouts. On the left, we directly allocate memory according to the variable's label, then two memory gather for $[x_1,x_3], [x_2, x_1]$ and one scatter for $[x_8, x_6, x_7]$ is performed because they are either not contiguous or aligned in memory. We say an operand of a batch is aligned in memory if the order of its operations matches with the one in memory. Now, consider the memory allocation on the right, which allocates memory following $(x_2,x_1,x_3,x_4,x_5,x_8,x_6,x_7)$. Then, every source and result operand of the batched execution is already contiguous and aligned in memory, saving us from extra memory copies. 

\subsection{PQ tree-based memory allocation}
To find the ideal layout, we designed an almost linear complexity memory allocation algorithm based on PQ tree, which is a tree-based structure used to solve the consecutive one property~\cite{c1p} and is previously applied to Biology for DNA-related analysis~\cite{pqtree-bio-1}. 

We define the \textit{ideal memory layout} as a sequence of variables satisfying two constraints:
\begin {itemize}[noitemsep,topsep=0pt,parsep=0pt,partopsep=0pt]
\item \textbf{Adjacency Constraint:} Result and source operands in every batch should be adjacent in the sequence. E.g. $\{x_4,x_5\}, \{x_1,x_3\}, \{x_2, x_1\}$ for $B1$, $\{x_6,x_7,x_8\}, \{x_4,x_3,x_5\}$ for $B2$ are adjacent in the sequence. 
\item \textbf{Alignment Constraint:} The order of the result and source operands should be aligned in a batch. E.g. for $B1$,  $x_4\prec x_5 \iff x_1 \prec x_3 \iff x_2 \prec x_1$ in the sequence.
\end {itemize}

\begin{figure*}[ht]
\centering
\vspace{-1ex}
\centerline{\includegraphics[width=\textwidth]{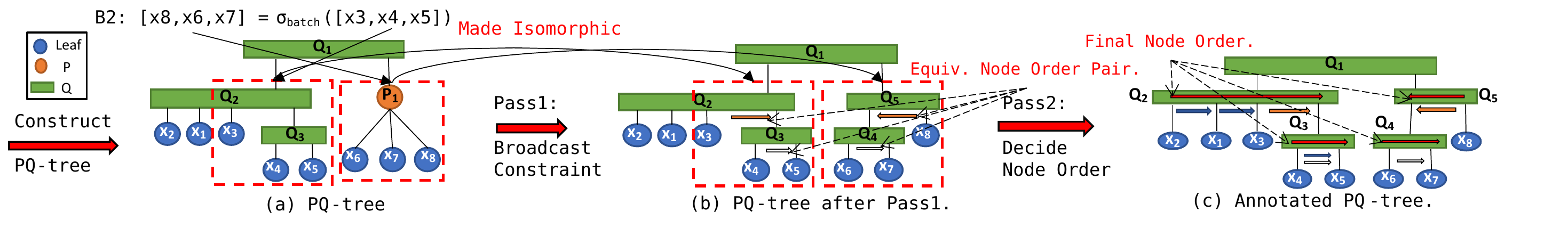}}
\vspace{-2mm}
\caption{Example for PQ tree-based algorithm. $x_1$-$x_8$ are variables. }
\label{fig: pqtree-example}
\end{figure*}

The adjacency constraint is satisfied by PQ tree~\cite{PQTree} algorithm. 
Given several subsets of a set $S$, the PQ tree algorithm returns a data structure in linear time called PQ tree, representing potential permutations of $S$ that elements in each subset are consecutive. Fig.\ref{fig: pqtree-example}(a) shows the PQ tree for the example code. The tree has three kinds of nodes: P-node, Q-node, and leaf node. Leaf nodes represent the variables; P-nodes have more than two children, whose appearance in the sequence is contiguous but could be permuted; Q-nodes have more than one child, whose appearance in the sequence follows an order but could be reversed. A depth-first traversal of the leaf nodes gives the sequence. For example, there is one P-nodes and three Q-nodes in Fig.\ref{fig: pqtree-example}(a). $Q_2$ indicates the order should only be $(x_2,x_1,x_3,Q_3)$ or $(Q_3,x_3,x_1,x_2)$, while $P_2$ indicates that one permutation of $\{x_6, x_7, x_8\}$ appears in the sequence. The adjacency of $\{x_4,x_5\}$ is embedded in $Q_3$, $\{x_1,x_3\}, \{x_2,x_1\}, \{x_4,x_3,x_5\}$ in $Q_2$, and $\{x_6,x_7,x_8\}$ in $P_1$. A possible sequence is $(x_2,x_1,x_3,x_4,x_5,x_6,x_7,x_8)$.

To satisfy the alignment constraint, we annotate each node on the PQ tree with an order. An annotated PQ tree is shown in Fig.\ref{fig: pqtree-example}(c), where a direction mark is attached to every Q-node, indicating its traversal order. As a result, any leaf node sequence of legal traversal on this annotated PQ tree indicates a memory allocation order satisfying the alignment constraint. 

\begin{algorithm} [tb]
\scriptsize
\caption{PQ tree Memory Allocation}
\label {alg: PQ tree Memory Arrangement}
\begin{algorithmic}[1]
    \Function{BroadcastConstraint}{$tree$, $\mathcal{B}$}\label{alg:pqtree:pass1}
        \State visited = getSet()
        \For {$batch$ in $\mathcal{B}$}
            \If {batch in visited}
                \State continue
            \EndIf 
            \State Q = Queue()
            \State Q.push($batch$)
            \While {not Q.isEmpty()} \label{alg:pqtree:iteration}
                \State $b$ = Q.pop();
                \State visited.insert($b$)
                \State $cons = $ \Call{ParseConstraints}{$b$}\label{alg:pqtree:parseconstraint}
                \State $suc,updatedBatches = $\Call{ApplyConstraints}{$cons$,$tree$}\label{alg:pqtree:applyconstraints}
                \If {$suc$ is False}    
                    \State $\mathcal{B}$.erase(b)
                \Else
                    \For {$b$ in $updatedBatches$}
                        \State Q.push($b$)
                    \EndFor 
                \EndIf
            \EndWhile
        \EndFor
    \EndFunction

    \Function{DecideNodesOrder}{$tree$, $\mathcal{B}$}\label{alg:pqtree:pass2}
        \State $POrder$ = getUnionFindSet(tree.PNodes) \Comment{A union-find set to decide QNode's direction.}
        \State $QOrder$ = getUnionFindSet(tree.QNodes) \Comment{A union-find set to decide PNode's permutation.}
        \For {$batch$ in $\mathcal{B}$}
            \State $EquivPairs$ = \Call{ParseEquivNodeOrderPair}{$tree$, $batch$}\label{alg:pqtree:parse equi-rel}
            \For {$EquivPair$ in $EquivPairs$}\label{alg:pqtree:spread equi-rel}
                \If {$EquivPair$ is a P-node pair}
                    \State POrder.Union($EquivPair$)
                \ElsIf {$EquivPair$ is a Q-node pair}
                    \State QOrder.Union($EquivPair$)
                \EndIf 
            \EndFor
        \EndFor
        \State \textbf{return} $QOrder$, $POrder$
    \EndFunction 
    
    \Function{Main}{$X$, $\mathcal{B} = [batch_1, ..., batch_n]$} 
        \Comment{$X$ the variable set, $\mathcal{B}$ the batches }
        \State $tree =$ \Call{ConstructPQTree}{$X$, $\mathcal{B}$}
        \State \Call{BroadcastConstraint}{$tree$, $\mathcal{B}$}
        \State $QOrder$, $POrder$ = \Call{DecideNodesOrder}{$tree$, $\mathcal{B}$}
        \State \textbf{return} \Call{GetLeafOrder}{$tree$, $QOrder$, $POrder$}
    \EndFunction
\end{algorithmic}
\end {algorithm}

Two passes obtain the order annotation to the PQ tree (Fig.\ref{fig: pqtree-example}, Alg.\ref{alg: PQ tree Memory Arrangement}). The first pass, \textproc{BroadcastConstriant}, makes the tree structure of each batch's operands isomorphic. For $B2$'s operands, $\{x_3,x_4,x_5\}$'s tree structure is $Q_2=(...,x_3, Q_3=(x_4,x_5))$, and $\{x_6,x_7,x_8\}$'s tree structure is $P_1=(x_6,x_7,x_8)$. They are made isomorphic in this pass. The second pass, \textproc{DecideNodeOrder}, derives the equivalent class of node and order pairs among P and Q-nodes and their orders and searches for a compatible solution for the direction of the Q-node and a permutation choice of the P-node complied with the equivalence relationship. 

We walk through the algorithm in the example. At first, the PQ tree is constructed by the standard algorithm to satisfy the adjacency constraint (Fig.\ref{fig: pqtree-example}(a)). After that, in \textproc{BroadcastConstraint}, for $B2$, we parse the adjacency constraint of it (line \ref{alg:pqtree:parseconstraint}) by 1) parsing the adjacency constraint for each operand, e.g. $\{x_3, x_4,x_5\}$ and $\{x_4,x_5\}$ for the source operand, $\{x_6,x_7,x_8\}$ for the result operand, and 2) transforming them across operands by alignment information, e.g. $\{x_4,x_5\}$ is transformed into $\{x_6,x_7\}$. After that, $\{x_6,x_7\}$ is applied to the PQ tree\footnote{Perform by standard \textproc{Reduce} step in the Vanilla PQ tree algorithm to satisfy adjacency constraint by restructuring the tree. } and we keep records on the batch whose tree structure has changed (line \ref{alg:pqtree:applyconstraints}). Now tree structures for $B2$'s operands are isomorphic, and we apply this process to other batches in a breadth-first search until no update on the tree structure happens. 

\begin{figure}
\vspace{-0.2in}
\centering
\centerline{\includegraphics[width=\columnwidth]{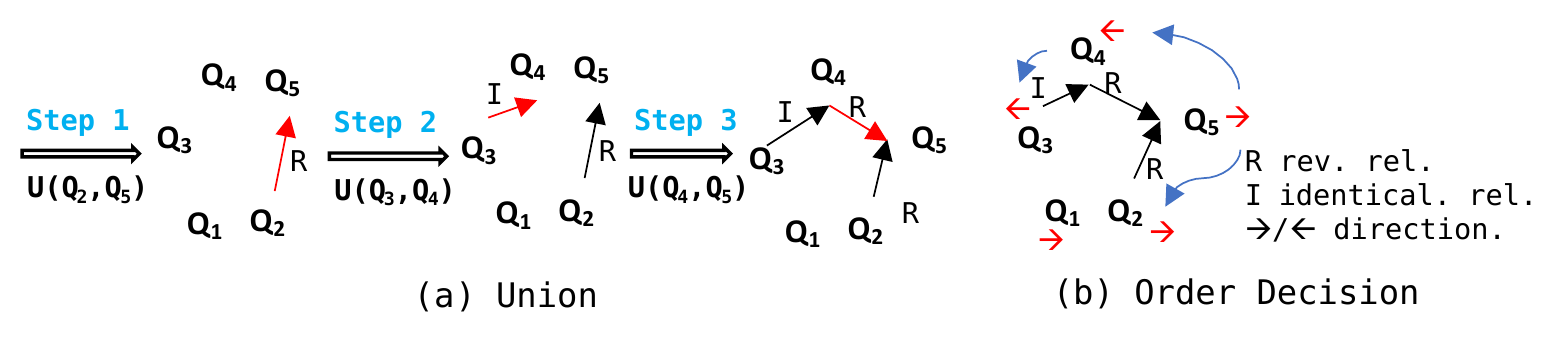}}
\vspace{-0.2in}
\caption{Illustration for order decision in Pass2.}
\label{fig: pqtree pass2}
\end{figure}

In \textproc{DecideNodeOrder} (line \ref{alg:pqtree:pass2}), we assign directions for the Q-node and the permutation for the P-node. We start by parsing the equivalence relationship (line \ref{alg:pqtree:parse equi-rel}) among Q-node and direction pairs or the P-node and permutation pairs from the isomorphic tree structures after the first pass, e.g. $<Q_2,\leftarrow>\iff<Q_3,\leftarrow>$ for $B1$, $<Q_3,\leftarrow>\iff<Q_4,\leftarrow>$ and $<Q_2,\leftarrow>\iff <Q_5,\rightarrow>$ for $B2$. After that,   
we spread the equivalence relationship across batches with the support of \textit{union–find data structure}. Shown in Fig.\ref{fig: pqtree pass2}, a graph carrying the equivalence relationship is constructed by iteratively \textproc{Union} equivalent relationship (line \ref{alg:pqtree:spread equi-rel}), i.e. the node and order pairs. When processing $<Q_2,\leftarrow>\iff <Q_5,\rightarrow>$, a $R$ edge $<Q_2,Q_5>$ is added to the graph, indicating $Q_2$'s direction is determined by the reverse of $Q_5$'s direction. 
When processing the $<Q_2,\leftarrow>\iff<Q_3,\leftarrow>$, we first find the decider of their order, i.e. $Q_5$ for $Q_2$ and $Q_4$ for $Q_3$, and add an $R$ edge between them. In this way, $Q_2$ and $Q_3$ always have the same order. Finally, the deciders in the graph are assigned arbitrary directions, which spread across the graph following the relationship on the edge (Fig.\ref{fig: pqtree pass2}(b)).

The PQ tree memory allocation algorithm's time complexity is given in lemma \ref{lemma:pqtree:complexity}, showing the PQ tree algorithm is linear to the problem size if the operations on a single batch are limited to a constant. 
\begin{lemma}\label{lemma:pqtree:complexity}
PQ tree memory allocation algorithm's time complexity is $O(\Sigma_{b\in batches}|b|\max^2_{b\in batches}|b|)$, where $|.|$ counts the operation in a batch.
\end{lemma}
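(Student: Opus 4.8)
The plan is to bound the running time of \textproc{Main} by summing the costs of its four phases---\textproc{ConstructPQTree}, \textproc{BroadcastConstraint}, \textproc{DecideNodesOrder}, and \textproc{GetLeafOrder}---and then showing that the first pass dominates. Throughout I would write $S = \Sigma_{b}|b|$ for the total number of batched operations and $M = \max_{b}|b|$, and lean on the structural fact that each batch has only a constant number of operands (at most the maximum operator arity), each operand being a set of at most $|b|$ variables; this keeps all per-operand bookkeeping within a constant factor of $|b|$. I would first dispatch the three easy phases. By the Booth--Lueker construction, \textproc{ConstructPQTree} runs in $O(|X|+S)$ time, and since every operation contributes its result variable we have $|X| = O(S)$; \textproc{GetLeafOrder} is a single depth-first traversal in $O(|X|)$; and \textproc{DecideNodesOrder} performs $O(S)$ \textproc{Union}/\textproc{Find} operations over the $O(|X|)$ P- and Q-nodes, which is near-linear (an inverse-Ackermann factor above $O(S)$). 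All three are comfortably within the claimed $O(S M^2)$.

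The heart of the argument is \textproc{BroadcastConstraint}, which I would analyze as (number of times a batch is processed) $\times$ (cost of one processing), summed over batches. For one processing of batch $b$, \textproc{ParseConstraints} reads the subtrees spanning $b$'s operands and emits $O(|b|)$ adjacency constraints (one per internal node of the operand structure), each transported across the constant number of operands via the alignment map; \textproc{ApplyConstraints} then runs a \textproc{Reduce} per constraint, each on a set of size at most $|b|$ and hence $O(|b|)$, for a total of $O(|b|^2)$ per processing. For the number of processings, the key observation is that $b$ is re-enqueued only when a \textproc{Reduce} strictly restructures the subtree over its $O(|b|)$ operand leaves, and that PQ-tree reductions are \emph{monotone}: once a Q-node's orientation or a P/Q refinement is imposed it is never undone. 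A potential argument on the substructure spanning $b$'s leaves---counting internal nodes plus the summed arities of the P-nodes---then shows this potential is $O(|b|)$ initially and drops by at least one per restructuring, so $b$ is processed $O(|b|)$ times.

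Combining the two estimates gives a total pass cost of $\Sigma_{b} O(|b|)\cdot O(|b|^2) = \Sigma_{b} O(|b|^3)$, and I would close with the elementary inequality $\Sigma_{b}|b|^3 \le (\max_{b}|b|)^2\,\Sigma_{b}|b| = M^2 S$, which is exactly $O\!\big(\Sigma_{b}|b|\,\max^2_{b}|b|\big)$. Since this dominates the other three phases, the stated bound follows, and in particular it collapses to the linear $O(S)$ claimed in the surrounding text whenever $M = O(1)$.

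I expect the main obstacle to be the second half of the middle paragraph: rigorously bounding how often a single batch is re-enqueued. The breadth-first search re-pushes a batch whenever a reduction triggered by a \emph{neighboring} batch alters its subtree, so a naive count could be as large as the number of batches sharing a variable. Making the $O(|b|)$ bound stick requires a potential that (i) is local to the $O(|b|)$ leaves of the batch, (ii) is provably monotone under each of the PQ-tree \textproc{Reduce} templates, and (iii) strictly decreases on precisely the restructurings that cause a re-enqueue. Pinning down this potential and checking monotonicity template by template is where the real work lies; the remaining arithmetic is routine.
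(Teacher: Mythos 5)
Your proposal follows the same skeleton as the paper's proof: the four-phase decomposition, the observation that \textproc{ConstructPQTree}, \textproc{DecideNodesOrder} (union--find with inverse-Ackermann overhead) and \textproc{GetLeafOrder} are near-linear, the $O(|b|^2)$ charge per processing of a batch inside \textproc{BroadcastConstraint}, and the closing inequality $\Sigma_{b}|b|^3 \le \max^2_{b}|b| \cdot \Sigma_{b}|b|$. The divergence---and the genuine gap---is the step you yourself flag: bounding the number of times a batch is processed. Your plan rests on the potential ``internal nodes plus summed arities of P-nodes over the substructure spanning $b$'s leaves'' being monotone non-increasing and dropping on every re-enqueue, but this potential is \emph{not} monotone under the standard \textproc{Reduce} templates. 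Applying a constraint to a strict subset of a P-node's children creates a fresh P-node over that subset (node count $+1$, P-arity sum goes from $a$ to $(a-s+1)+s = a+1$, so the potential rises by $2$), and the partial template that replaces a P-node by a Q-node with two P-node children adds two nodes while leaving the P-arity sum unchanged. These are precisely the restructurings that trigger re-enqueues, so your potential increases exactly when you need it to decrease; requirement (ii) of your own checklist fails, and with it the $O(|b|)$ per-batch bound and the $\Sigma_b O(|b|^3)$ total.

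The paper avoids a per-batch potential entirely and counts iterations \emph{globally}: every structural update either converts a P-node into a Q-node or introduces a new node, so the total number of updates over the whole run is bounded by the number of internal nodes of the PQ tree, hence by the number of leaf variables; multiplying this global iteration count by the worst-case per-iteration cost $O(m\,k^2)$ with $k \le \max_b|b|$ and constant arity $m$ yields the stated bound. If you want to repair your draft, the cheapest route is to adopt that accounting. Be aware, though, that your multiplicity worry is legitimate: one \textproc{Reduce} can re-enqueue every batch mapped to a changed node, and the paper's global count silently treats each update as causing $O(1)$ re-enqueues. A per-batch argument would actually be the more robust way to close this, but it requires a potential that provably decreases under \emph{all} templates---for instance one certifying that strictly refining chains of canonical PQ trees projected onto $b$'s leaves have length $O(|b|)$---and neither your proposed potential nor anything else in your draft establishes that.
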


Right now, PQ tree memory optimization is applied to the static subgraph because its execution time still does not fit into the high runtime constraint for dynamic DNNs. But the algorithm itself and the idea of better memory planning for batching is applicable to any batching problem. A detailed explanation of the algorithm is in Appendix\ref{ref:pqtree}.

\section{Implementation}
The optimizations in \name~are fully automated and implemented as a runtime extension to DyNet in 5k lines of C++ code. The user can optionally enable the batching optimization by passing a flag when launching the application, and enable the static subgraph optimization by defining subgraphs in DyNet's language with a few annotations. Before execution, the RL algorithm learns the batching policy and \name~optimizes the static subgraph by the approach in \S\ref{BasicBlockOpt}. Upon execution, \name~calls DyNet's executor for batched execution, which is supported by vendor libraries.
\section {Evaluation}

\subsection{Experiment Setup}
We evaluate our framework against DyNet and Cavs, two state-of-the-art runtimes for dynamic DNNs, which are shown to be faster than traditional frameworks like Pytorch and Tensorflow ~\cite{cavs, dynet}. Cavs' open-sourced version has worse performance than DyNet, as stated in ~\citet*{cortex} because certain optimizations are not included. To make a fair comparison with Cavs, we use an extended version of DyNet with the main optimizations in Cavs enabled as a reference for Cavs' performance (referred to as Cavs DyNet). Namely, the static subgraphs in the network are pre-defined and batching optimization is applied to them. \name~is implemented on top of this extended version, with the RL-based dynamic batching algorithm ($E_{sort}$ for state encoding) and memory optimization on the static subgraph by PQ Tree. On the other side, the agenda-based algorithm and the depth-based algorithm are used for dynamic batching on Vanilla/Cavs DyNet. Depending on the workload and configuration, a better-performing algorithm is chosen for Vanilla/Cavs DyNet in the evaluation. 

We test the framework on 8 workloads, shown in Table~\ref{tab:workloads}. They follow an increase in dynamism, from chains to trees and graphs. Except for lattice-based networks, all workloads appeared as the benchmark for past works. 

We run our experiments on a Linux server with an Intel Xeon E2590 CPU (28-physical cores) and an Nvidia V100 GPU. The machine runs CentOS 7.7, CUDA 11.1, and cuDNN 8.0. We use DyNet's latest version (Aug 2022, commit c418b09) for evaluation.

\begin{table}[t]
\centering
\caption{Models and datasets used in evaluation}
\scriptsize
\begin{tabular}{ p{14em} | p{5em} | p{7em} }
\toprule[2pt]
\textbf{Model} & \textbf{Short name} & \textbf{Dataset} \\
\hline 
A bi-directional LSTM Named Entity Tagger \cite{bilstm-tagger} & BiLSTM-Tagger & WikiNER English Corpus\cite{bilstm-dataset} \\\hline
An LSTM-based encoder-decoder model for neural machine translation. & LSTM-NMT & IWSLT 2015 En-Vi \\\hline
N-ary TreeLSTM~\cite{treelstm} & TreeLSTM & \multirow{3}{5em}{Penn treebank~\cite{penn-treebank}} \\ \cline{1-2} 
N-ary TreeGRU & TreeGRU & \\ \cline{1-2} 
MV-RNN~\cite{MVRNN} & MV-RNN & \\  \cline{1-2}
An extension to TreeLSTM that contains two types of internal nodes, each with 50\% probability & TreeLSTM-2Type & \\
\hline
A lattice-based LSTM network for Chinese NER ~\cite{LatticeLSTM} & LatticeLSTM  & \multirow{2}{5em}{Lattices generated based on Chinese Weibo Dataset\footnote{https://github.com/OYE93/Chinese-NLP-Corpus/tree/master/NER/Weibo}}\\ \cline{1-2}
A lattice-based GRU network for neural machine translation~\cite{latticeGRU} & LatticeGRU  & \\ 
\bottomrule[2pt]
\end{tabular}
\label{tab:workloads}
\end{table}

\begin{figure*}[!t]
\vspace{-1ex}
     \centering
     \includegraphics[width=\textwidth]{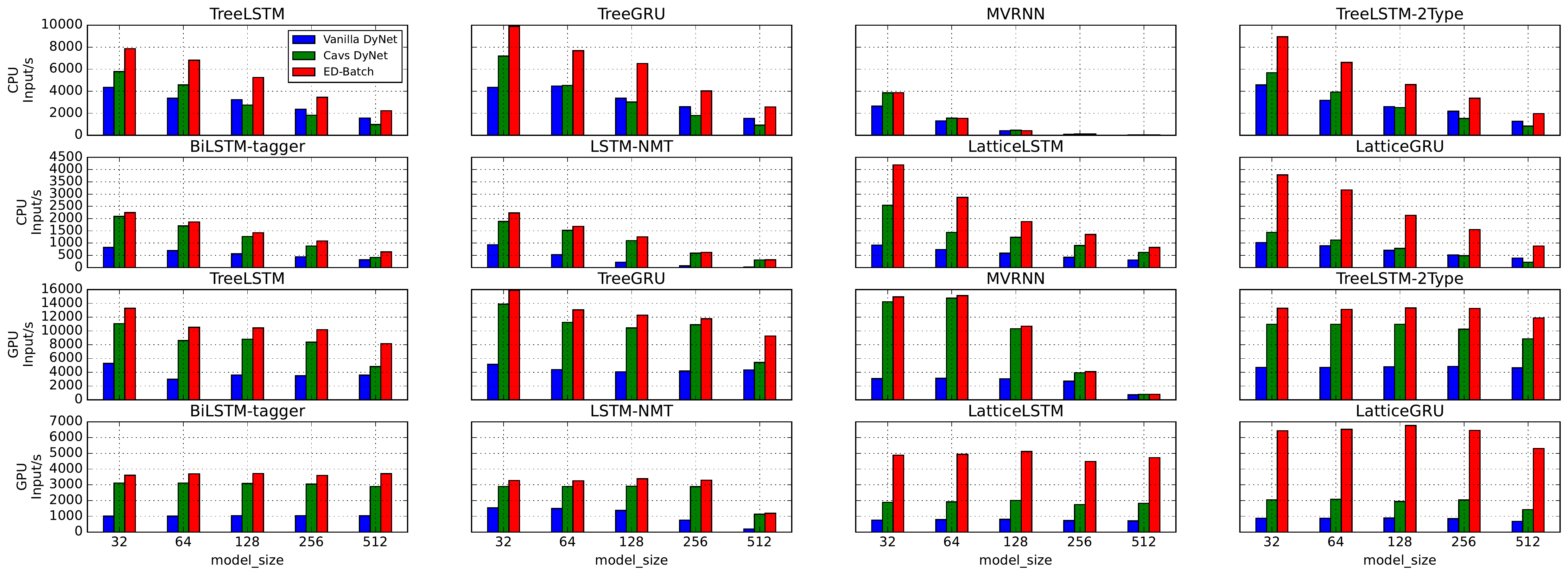}
    \vspace{-8mm}
        \caption{\name~v.s. Vanilla/Cavs DyNet: Inference Throughput}
        \label{fig: overall performance}
    \vspace{-2ex}
\end{figure*}

\subsection{Overall Performance}
We compare \name's end-to-end inference throughput against Vanilla/Cavs DyNet. We follow past work to evaluate different batch sizes (1, 8, 32, 64, 128, 256) and model sizes (32, 64, 128, 256, 512), which is the size for the hidden vector length and the embedding size. The throughput is calculated as the maximum throughput among all bath size choices. For all cases, \name~outperforms Vanilla DyNet significantly due to the reduction in graph construction and runtime overhead by pre-definition of the static subgraph. 

We now discuss the comparison with Cavs DyNet. For chain-based models, the BiLSTM-tagger and LSTM-NMT, \name~achieved on average 1.20x, 1.11x speedup on CPU and 1.20x, 1.12x on GPU. Because the network structure is basically made up of chains, shown in Fig.\ref{fig: kernel launch},  both the agenda-based algorithm and the FSM-based batching algorithm find the optimal batching policy. On the other hand, the LSTMCell is 1.54x faster with the PQ-tree optimization compared to the one with the DyNet's memory allocation, which explains the speedup.

For the tree-based model, compared to agenda/depth-based batching heuristic \name~reduces the number of batches by 37\%. This is because the FSM-based algorithm executes the output nodes in one batch (Fig.\ref{fig1:example}). For TreeLSTM and TreeGRU, \name~achieved on average 1.63x, 1.46x speedup on CPU and 1.23x, 1.29x speedup on GPU. \name's performance is close to Cavs DyNet on MVRNN because the execution is bounded by matrix-matrix multiplications which can hardly benefit from extra batch parallelism and the reduction in runtime overhead. 

For the lattice-based models, the LatticeLSTM and LatticeGRU, \name~increases DyNet Cavs's throughput significantly by 1.32-2.97x on CPU and 2.54-3.71x on GPU, which is attributed to both the better dynamic batching and static subgraph optimization. For the lattice-based models' network structure in Fig.\ref{fig: Lattice}, the FSM-based algorithm prioritizes the execution of the character cell and delays the execution of the word cell, whereas the depth/agenda-based algorithms batch the character cell and word cell more arbitrarily. As a result, the number of batches is reduced by up to 3.27 times (Fig.\ref{fig: kernel launch}). For the static subgraph, the used LSTMCell and GRUCell's latency is cut by 34\% and 35\%, which adds to the speedup.

\subsection{Analysis}

\textbf{Where does \name's speedup come from?} Shown in Fig.\ref{fig: time decomposition}, we decompose the inference pass into the construction time, scheduling time, and execution time. Construction time is the time to define the dataflow graph. Scheduling time is the time for dynamic batching analysis. Execution time is the left time of the forward pass, mainly composed of the execution of operations. While having similar construction/scheduling time, \name~speeds up Cavs DyNet in the great cutdown in execution time benefited from better batching and fewer kernels for data movement. 

\begin{figure}[ht]
\centerline{\includegraphics[width=\columnwidth]{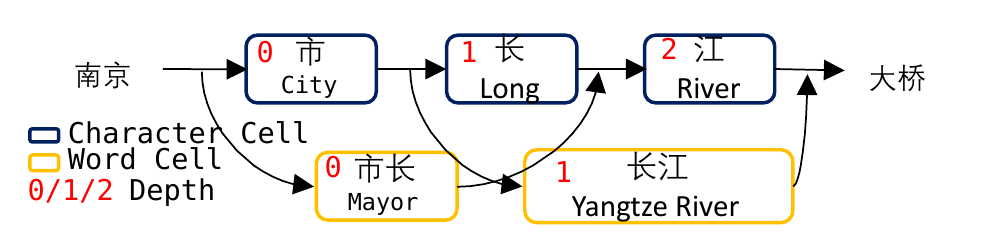}}
\caption{Lattice Network for Chinese NER. The topology for one input sentence is a chain of character cells with jump links of word cells. Agenda/Depth-based batching algorithm fails to batch the word cells together. }
\label{fig: Lattice}
\end{figure}

\begin{figure}[ht]
\vspace{-2ex}
\centering
\centerline{\includegraphics[width=0.95\columnwidth]{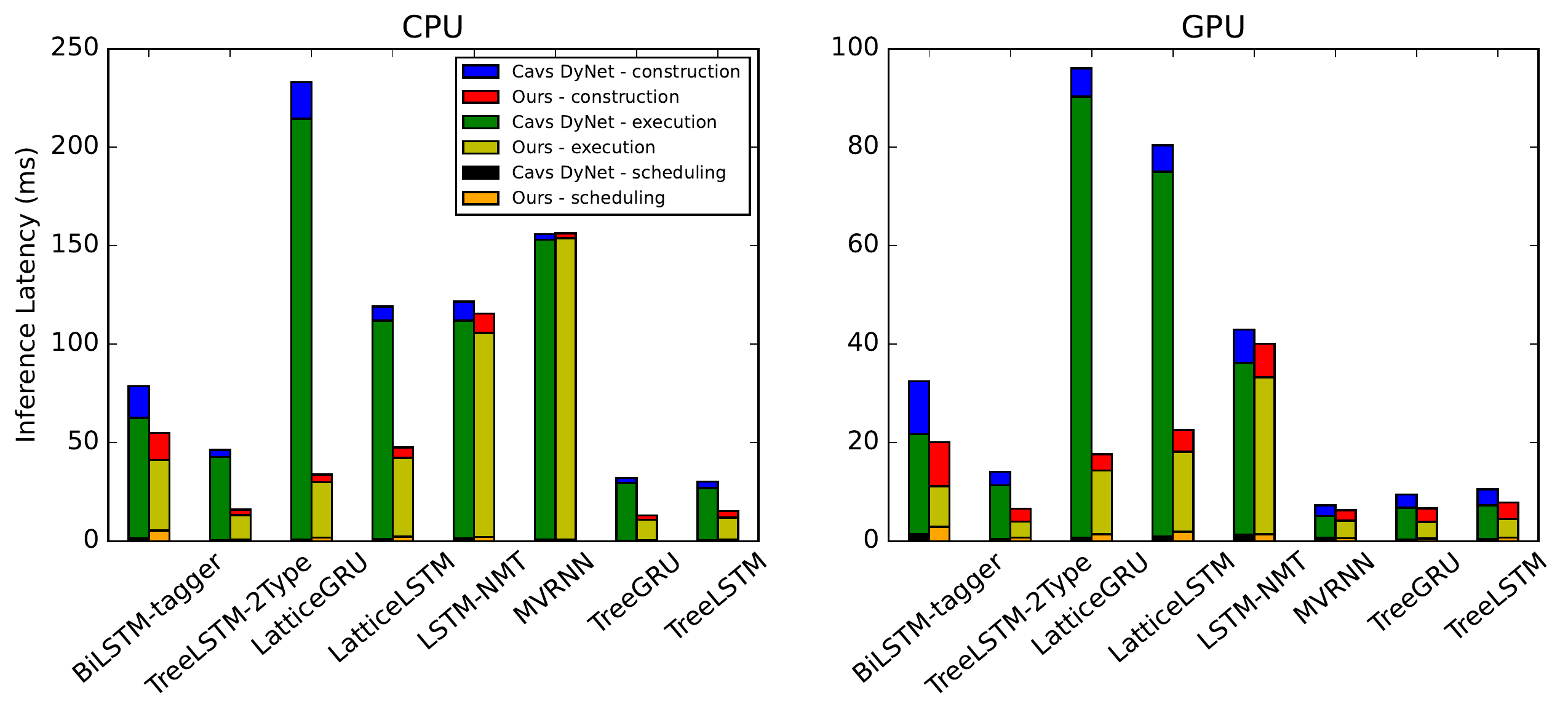}}
\vspace{-2ex}
\caption{Cavs DyNet v.s. \name: Time Decomposition when model size is 128 and batch size is 64.}
\label{fig: time decomposition}
\vspace{-1ex}
\end{figure}

\begin{figure}[ht]
\vspace{-2ex}
\centering
\centerline{\includegraphics[width=0.8\columnwidth]{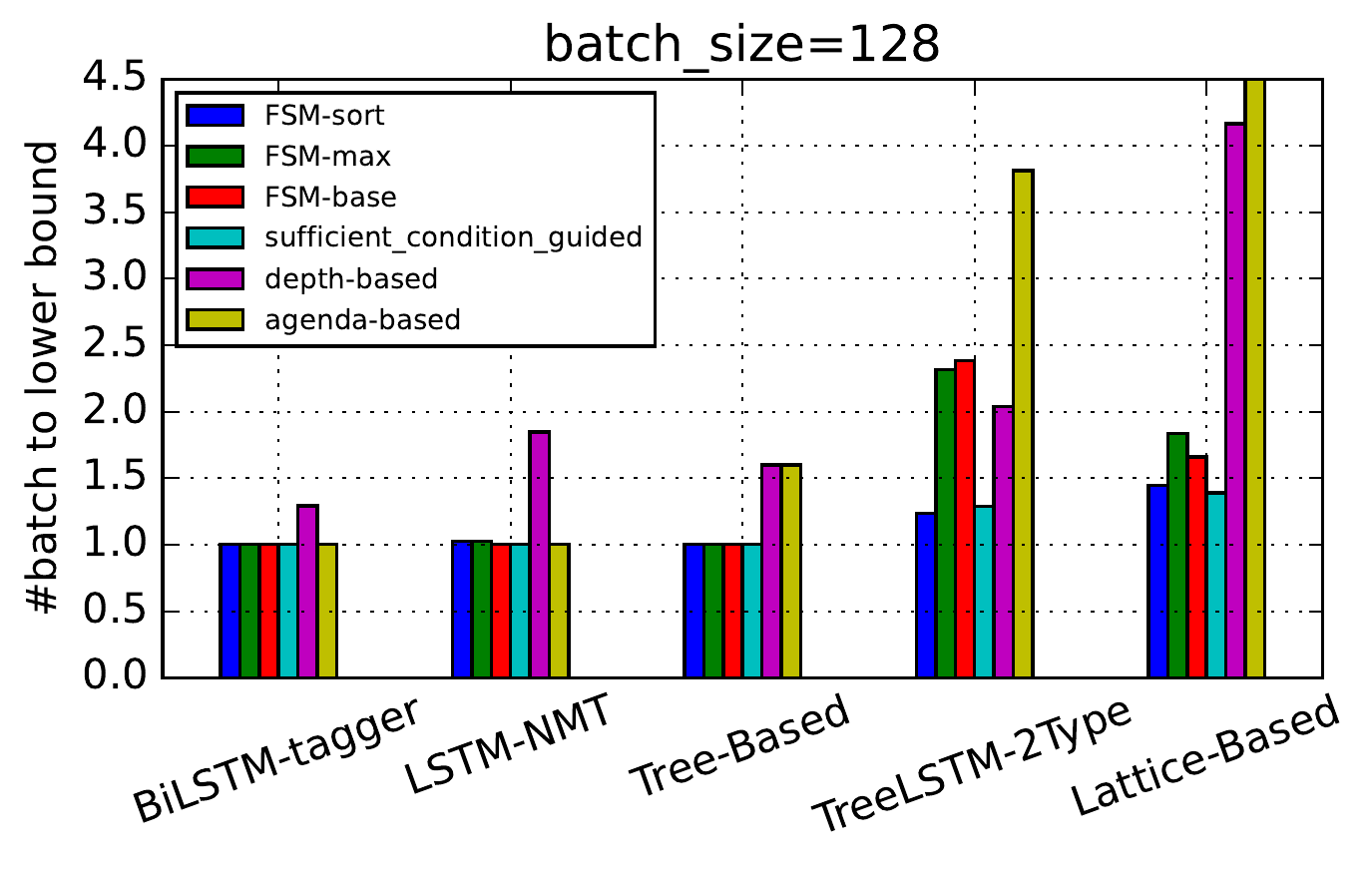}}
\vspace{-2ex}
\caption{The number of batches for different batching algorithms. FSM-base/sort/max refers to the FSM based algorithm with different state encodings.}
\label{fig: kernel launch}
\end{figure}

\begin{table}[t]
\caption{Batching with DyNet's memory allocation (left) v.s. Batching with PQ tree-based memory allocation on static subgraphs. batch size = 8, model size = 64.}
\scalebox{0.60}{
\begin{tabular}{lllllll}
\toprule
{} & \multicolumn{2}{l}{Latency (ms)} & \multicolumn{2}{l}{Mem Kernels/Subgraph} & \multicolumn{2}{l}{Memcpy Amount (kB)} \\
Subgraph &                 value & ratio &               value & ratio &                   value &  ratio \\
\midrule
GRUCell           &  0.11 / \textbf{0.07} &  1.54 &  6 / \textbf{2} &   3.0 &   666.0 / \textbf{14.0} &  47.57 \\
LSTMCell          &   0.2 / \textbf{0.13} &  1.52 &  4 / \textbf{1} &   4.0 &  1054.0 / \textbf{16.0} &  65.88 \\
MVCell            &  \textbf{0.08} / 0.08 &  0.96 &  2 / \textbf{2} &   1.0 &  260.0 / \textbf{260.0} &    1.0 \\
TreeGRU-Internal  &  0.24 / \textbf{0.15} &   1.6 &  8 / \textbf{2} &   4.0 &   552.0 / \textbf{16.0} &   34.5 \\
TreeGRU-Leaf      &  0.09 / \textbf{0.07} &   1.4 &  4 / \textbf{2} &   2.0 &    268.0 / \textbf{8} &   33.5 \\
TreeLSTM-Internal &  0.19 / \textbf{0.12} &  1.61 &  7 / \textbf{3} &  2.33 &  1064.0 / \textbf{22.0} &  48.36 \\
TreeLSTM-Leaf     &  0.12 / \textbf{0.09} &  1.27 &  3 / \textbf{1} &   3.0 &    396.0 / \textbf{6.0} &   66.0 \\
\bottomrule
\end{tabular}
}

\label{tab: subgraph}
\end{table}

\begin{table}
\scalebox{0.65}{
\parbox{.9\columnwidth}{
\caption{RL Training Time and iterations}
\centering
\begin{tabular}{lrr}
\toprule
{} &    Time (s) &  Train Iter. \\
\midrule
TreeLSTM               &   0.154 &        50 \\
TreeGRU                &   0.141 &        50 \\
MVRNN                  &   0.254 &        50 \\
TreeLSTM-2type         &   2.217 &      1000 \\
BiLSTM-tagger          &   1.629 &        50 \\
BiLSTM-tagger-withchar &   6.268 &        50\\
LatticeLSTM            &  21.733 &      1000 \\
LatticeGRU             &   4.911 &      1000 \\
\bottomrule
\end{tabular}
\label{tab: RL Compile Time}
}
\hspace{1ex}
\parbox{.5\columnwidth}{
\centering
\caption{Static Subgraph Compile Time}
\begin{tabular}{lc}
\toprule
{} &  Time (ms) \\
\midrule
GRUCell           &               4.82 \\
LSTMCell          &              12.95 \\
MVCell            &               1.53 \\
TreeGRU-Internal  &              10.43 \\
TreeGRU-Leaf      &               2.91 \\
TreeLSTM-Internal &              29.89 \\
TreeLSTM-Leaf     &               3.64 \\
\bottomrule
\end{tabular}
\label{tab: static subgraph Compile Time}
}
}
\end{table}

\begin{table}[]
    \centering
    \caption{\name~v.s. Cortex: Inference Latency (ms).}
\scalebox{0.8}{
    \begin{tabular}{llrrrr}
\toprule
   &  & \multicolumn{2}{l}{TreeGRU} & \multicolumn{2}{l}{TreeLSTM} \\
 batch\_size  & model\_size  &  Cortex &  Ours &   Cortex &  Ours \\
\midrule
10 & 256 &    2.30 &  \textbf{2.27} &    \textbf{2.244} &  2.78 \\
   & 512 &    5.60 &  \textbf{3.04} &    8.500 &  \textbf{4.70} \\
20 & 256 &    3.73 &  \textbf{3.03} &    \textbf{3.460} &  3.52 \\
   & 512 &   11.70 &  \textbf{3.70} &   19.210 &  \textbf{4.82} \\
\bottomrule
\end{tabular}
}
    \label{tab:cortex}
\end{table}

\textbf{Does the algorithm find a good enough batching policy?}\label{alg evaluation} Shown in Fig.\ref{fig: kernel launch}, compared to the agenda/depth-based batching algorithm, \name's FSM-based batching algorithm uniformly executes fewer batches. Among three state encoding choices, $E_{sort}$ is slightly better because of the stronger expressiveness, finding the optimal batching policy on BiLSTM-tagger, LSTM-NMT, and Tree-based models and executing 23\% and 44\% more batches on TreeLSTM-2Type, Lattice-Based models.  

To demonstrate the efficiency of the reward function, we measure the number of batches executed by a sufficient-condition-guided heuristic, which selects the type for the next batch that maximizes the second term in Eq.\ref{reward function}. Shown in Fig.~\ref{fig: kernel launch}, this heuristic executes batches paramount to the best FSM-based algorithm. However, this heuristic has higher time complexity and adds to unacceptable runtime overhead. Thus, on the evaluated workloads, the FSM-based algorithm can be treated as a time-efficient distiller of this heuristic. 

\textbf{Ablation Study of the Static Subgraph Optimization.} In Table \ref{tab: subgraph}, we evaluate \name's memory layout optimization on the static subgraphs. For all evaluated cases, the PQ-tree algorithm finds the \textit{ideal memory allocation order} (Remained data transfer is caused by broadcast that cannot be optimized by better memory layout). Compared to the baseline, \name reduces the latency of the static subgraph by up to 1.6x, memory kernels by up to 4x, and memory transfer amount by up to 66x. This significant reduction in memory transfer can be attributed to the better arrangement of the weight parameters. For example, there are four gates in the LSTM cell that perform feed-forward arithmetic $y_i=W_ix_i+b_i$, which are executed in a batch. The memory arrangement in \name~makes sure the inputs, parameters, and intermediate results of batched kernels are contiguous in the memory, which is not considered by DyNet's policy. Since the weight matrix occupies memory relative to the square of the problem size, this leads to a huge reduction in memory transfer.

\textbf{Comparison with a more specialized framework.} Cortex~\cite{cortex} is highly specialized for optimizing a class of recursive neural networks and it requires the user to not only express the tensor computation, but also specify low-level optimizations specific to underlying hardware, both through TVM's domain-specific language~\cite{tvm}. We compare \name with Cortex on TreeLSTM and TreeGRU. To make more of an apples-to-apples comparison in terms of the user's effort in developing the application, we enabled Cortex's automated optimizations like \textit{linearization} and \textit{auto-batching} and used simple policies on optional user-given (manual) optimizations like kernel fusion and loop transformation (details in \S\ref{sec:appendix:cortex}).  As shown in Table \ref{tab:cortex}, \name can speed up Cortex by up to 3.98x. 

\textbf{The compilation overhead.} We trained the RL for up to 1000 trials and stopped early if the number of batches reaches the lower bound (check every 50 iterations). On the static subgraph, batching is performed as a grid search and the PQ tree optimization is applied afterward. As shown in Table~\ref{tab: RL Compile Time} and Table~\ref{tab: static subgraph Compile Time}, it takes tens of milliseconds to optimize the static subgraph and up to 22 seconds to learn the batching policy for tested workloads.
\section{Related Work}
There are a variety of frameworks specialized for dynamic neural network training~\cite{dynet,tffold,cavs}, inference~\cite{cortex,acrobat,jit-batching}, and serving~\cite{batchmaker}. Concerning batching for the dynamic neural networks, DyNet~\cite{dynet} and TFFold~\cite{tffold} laid the system and algorithm foundation to support \textit{dynamic batching}. However, their batching heuristics are often sub-optimal as we saw above. Nevertheless, their algorithms have been used in other frameworks, like Cavs~\cite{cavs} and Acrobat~\cite{acrobat}. Apart from batching, another major direction of optimization is to extract the static information from the dynamic DNN and optimize them during compile time. Cavs~\cite{cavs} proposed the idea of predefining the static subgraphs, which was later extended in \cite{jit-batching} to batch on different granularities. \name~adopts this multi-granularity batching idea to perform batching on both the graph level and the subgraph level. For static subgraphs, traditional techniques are used for optimization, like the kernel fusion in Cavs and Cortex~\cite{cortex}, the AoT compilation~\cite{nimble}, and specialized kernel generation in ACRoBat~\cite{acrobat}. However, though with high efficiency, these optimizations can hardly be automated because either the developer or the user needs to optimize each subgraph manually. In \name, fully automated runtime optimizations are used instead to enable both efficient execution and generalization.

\section {Conclusion}
In \name, we designed an FSM-based algorithm for batched execution for dynamic DNNs. Also, we mitigated the memory transfer introduced by batching through the memory layout optimization based on the PQ-tree algorithm. The experimental results showed that our approach achieved significant speedup compared to current frameworks by the reduction in the number of batches and data movement.
\section*{Acknowledgements}
We thank Peking University's supercomputing team for providing the hardware to run the experiments and thank Kevin Huang of CMU for his help in early exploration of this research space. 

\bibliography{refs}
\bibliographystyle{icml2020}





\newpage
\clearpage
\begin{appendices}

\section{Dynamic Batching}
\subsection{Proof of NPC property}
For a directed acyclic graph $G(V,E)$, each node has a type $t\in T$, we define \textit{batch sequence} as a sequence of types $s \in T^*$, that can be used iteratively as the next type in Alg.\ref{alg:Dynamic Batching} to batch the whole dataflow graph. The \textit{Batching} problem is to find a batch sequence with the smallest possible length, denoted as \textit{optimal batching sequence}. 

\begin{theorem}[NP-hard for Batching] \textit{Batching} is NP-hard.
\end{theorem}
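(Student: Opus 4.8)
The plan is to reduce from the \emph{shortest common supersequence} (SCS) problem~\cite{SCS}, which is NP-hard: given strings $s_1,\dots,s_k$ over an alphabet $\Sigma$, find a shortest string $w$ such that each $s_i$ is a subsequence of $w$. The core observation I would exploit is that a single directed path (chain) whose nodes carry types $t_1,\dots,t_m$ behaves exactly like a string: because each node depends on its predecessor, only the current head of the chain ever lies on the frontier, so one step of Alg.~\ref{alg:Dynamic Batching} advances the chain by a single position precisely when the chosen type matches the head's type. Consequently the chain is fully batched by a batch sequence $w$ if and only if $(t_1,\dots,t_m)$ is a subsequence of $w$.

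Given an SCS instance, I would build in polynomial (indeed linear) time a dataflow graph $G$ consisting of $k$ vertex-disjoint chains, the $i$-th realizing $s_i$ as a path of $|s_i|$ nodes whose types are its symbols, and set the type set $T=\Sigma$. Since the chains share no edges, choosing a type $t$ at a given step advances the head of \emph{every} chain whose current head has type $t$ simultaneously, which is exactly the semantics of a single symbol of a common supersequence being reused by several of the $s_i$ at once. Combining this with the per-chain equivalence above, a type sequence $w$ batches all of $G$ if and only if $w$ is a common supersequence of $\{s_1,\dots,s_k\}$.

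It then remains to argue that the length of the optimal batching sequence equals the SCS length rather than merely bounding it. One direction is immediate from the equivalence. For the other, I would observe that a shortest common supersequence $w^\star$ cannot contain a position advancing no chain, since deleting such a position would yield a shorter common supersequence; hence every step of $w^\star$ corresponds to a nonempty batch and $w^\star$ is itself a legal batching sequence of the same length. Therefore an optimal batching of $G$ solves the SCS instance, and NP-hardness of SCS transfers to \emph{Batching}.

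The main subtlety, and the step I would be most careful about, is the chain-to-string correspondence: I must verify that because a batch consumes \emph{all} frontier nodes of the chosen type at once, the forced left-to-right consumption of a chain succeeds exactly when the subsequence relation holds, and that simultaneous advancement of several chains cannot let a shorter sequence evade the common-supersequence requirement. The minimality argument ruling out wasted ``empty-batch'' steps is the other place where care is needed, so that the reduction delivers an exact equality of optimal values and not just a one-sided inequality.
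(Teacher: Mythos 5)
Your proposal is correct and follows essentially the same route as the paper's own proof: a reduction from shortest common supersequence in which each string becomes a vertex-disjoint chain, the greedy semantics of Alg.~\ref{alg:Dynamic Batching} make ``$w$ batches the chain'' equivalent to ``the string is a subsequence of $w$,'' and hence the optimal batching length equals the SCS length. Your explicit handling of the empty-batch subtlety is a minor refinement the paper glosses over, but the argument is the same.
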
\label{the: np-hardness for batching}
\begin{proof} 
We prove the NP-hardness by reducing from \textit{Shortest Common Supersequence} (SCS). Given an alphabet $A$, a set of strings, $s_1,s_2,...,s_n$ in $A$, the SCS problem finds the shortest common supersequence for these strings. Treating each letter in the string as a node, the string is a chain of nodes, which is a DAG. Therefore, $s_1,s_2,...,s_n$ compose a DAG with many independent chains. Suppose the \textit{optimal batching sequence} for this DAG is found, we claim that it is exactly the common supersequence for these strings. On one side, every string must appear as a substring in the \textit{optimal batching sequence} to complete the batching. On the other side, if there is a common supersequence shorter than the \textit{optimal batching sequence}, this common supersequence is also a legal batching sequence. This is because that in  the Alg.\ref{alg:Dynamic Batching}, we greedily batch nodes in the frontier once their type is equal to the one in the batching sequence. So it is sufficient for a string to appear as a subsequence to be fully batched. This yields the contradiction. So the \textit{optimal batching sequence} is the common supersequence, indicating SCS can be solved by \textit{Batching} with poly time encoding. So, \textit{Batching} is NP-hard.
\end{proof}

Till now, there is no constant guaranteed approximation algorithm for the SCS problem, and so does \textit{Batching}. 

\subsection {Proof of the sufficient condition on batching} \label{reward proof}
\begin{lemma}
If $\frac{|Frontier(G^{a_t}_t)|}{|Frontier_{a_t}(G_t)|} = 1$, then there exists a shortest batching sequence starting with $a_t$. 
\end{lemma}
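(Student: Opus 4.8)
The plan is to prove this by an exchange argument built on a monotonicity property of the batching process. First I would model batching as a state-transition system: for a set $X$ of already-executed nodes and a type $t$, let $\mathrm{exec}(X,t)$ be the executed set after batching type $t$, namely $X$ together with every type-$t$ node whose predecessors all lie in $X$. A batching sequence $\sigma=\sigma_1\cdots\sigma_k$ then induces executed sets $A_0=\emptyset$, $A_m=\mathrm{exec}(A_{m-1},\sigma_m)$, and $\sigma$ is a valid batching of $G_t$ iff $A_k$ equals the whole node set of $G_t$. The one structural fact I would establish up front is \emph{monotonicity}: if $X\subseteq Y$ then $\mathrm{exec}(X,t)\subseteq\mathrm{exec}(Y,t)$, which is immediate from the definition, since having fewer executed predecessors can only disable, never enable, a node.

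The crux — and the place where the hypothesis $|Frontier(G^{a_t}_t)|/|Frontier_{a_t}(G_t)|=1$ is actually used — is the claim that in \emph{any} batching sequence, at the first step processing type $a_t$, the set of type-$a_t$ nodes batched is exactly $Frontier_{a_t}(G_t)$, independent of what was batched before. The inclusion $Frontier_{a_t}(G_t)\subseteq\{\text{ready }a_t\text{-nodes}\}$ is clear: these nodes have no predecessor in $G_t$ and no $a_t$-node has yet been executed. For the reverse inclusion I would argue that a type-$a_t$ node that is ready at the first $a_t$-step has all of its ancestors already executed, so it can have no $a_t$-ancestor, hence it lies in $Frontier(G^{a_t}_t)$; the equality hypothesis then collapses $Frontier(G^{a_t}_t)$ back onto $Frontier_{a_t}(G_t)$. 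This is exactly where ratio-equals-one does the work: without it, the batched set could strictly contain $Frontier_{a_t}(G_t)$ and depend on history.

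With these two facts in hand I would run the exchange. Take any optimal sequence $\sigma$ of length $k$ (it must contain $a_t$, since the $a_t$-nodes have to be executed), and let $j$ be its first $a_t$-position. Form $\sigma'=a_t,\sigma_1,\ldots,\sigma_{j-1},\sigma_{j+1},\ldots,\sigma_k$, i.e. prepend $a_t$ and delete the occurrence at $j$, which keeps the length at $k$. By the crux claim both $\sigma$ at step $j$ and $\sigma'$ at step $1$ batch precisely $Frontier_{a_t}(G_t)=:R$, so $A_j=A_{j-1}\cup R$. A two-phase induction using monotonicity then shows $A'_m\supseteq A_{m-1}\cup R$ for the first $j$ steps (giving $A'_j\supseteq A_j$), and $A'_i\supseteq A_i$ for the remaining steps, so $A'_k$ equals all of $G_t$. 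Hence $\sigma'$ is a valid batching sequence of optimal length $k$ beginning with $a_t$.

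The main obstacle I anticipate is the crux claim, specifically making the reverse inclusion airtight: I must show that readiness of an $a_t$-node at the first $a_t$-step forces every ancestor to be already executed and therefore rules out any $a_t$-ancestor, and only then invoke the hypothesis to identify $Frontier(G^{a_t}_t)$ with $Frontier_{a_t}(G_t)$. Everything downstream — monotonicity and the exchange induction — is routine once this identification is pinned down.
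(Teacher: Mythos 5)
Your proposal is correct and takes essentially the same route as the paper's own proof: both rest on the observation that the nodes batched at the first $a_t$-step of an optimal sequence must lie in $Frontier(G^{a_t}_t)$, which the hypothesis identifies with $Frontier_{a_t}(G_t)$, so that batch is already executable at time $t$ and can be moved to the front without lengthening the sequence. The paper compresses this exchange into a terse proof-by-contradiction and leaves the monotonicity/induction justification of ``moving the batch to the front'' implicit; your write-up simply makes those routine steps explicit.
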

\begin{proof}
Proof by contradiction. If this is not true, let $S$ be the set of operations of the first batch whose operation type is $a_t$. Then, we must have $S \subset Frontier(G^{a_t}_t)$. Because $\frac{|Frontier(G^{a_t}_t)|}{|Frontier_{a_t}(G_t)|} = 1$, meaning that $S$ is ready to execute for the first batch. Thus, by moving $S$ to the first batch committed, we get one of the shortest batching sequences starting with $a_t$. Contradiction.
\end{proof}

\subsection{Lower Bound}\label{lower bound}
For a dataflow graph $G$ and type set $T$, The lower bound of kernel launches is given by 
\begin{equation}\label{lowerbound}
|Batching^*(G)| \ge \Sigma_{t\in T} Depth(G_t) 
\end{equation}.
The heuristic behind the formula is that it requires at least $Depth(G_t)$ steps to fully execute the $G_t$. And because of the dependency between $G_t$s, the execution requires at least $\Sigma_{t\in T} Depth(G_t)$ steps to finish.

\subsection{Case the FSM does not cover}

\begin{figure}
\includegraphics[]{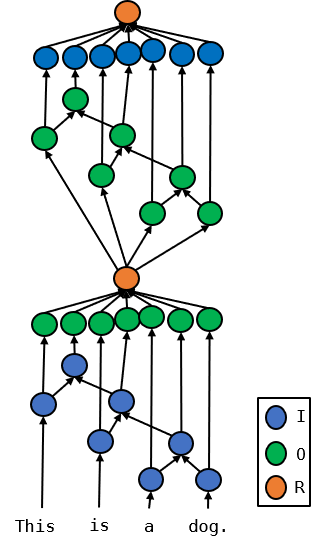}
\caption{Example for when the FSM doesn't work.}
\label{apdx: case not work}
\end{figure}

There are cases when the FSM cannot find a good policy. In the fake example in Fig.\ref{apdx: case not work}, we concatenate two tree networks, but the second has the type of Internal node and Output Node swapped. Here, the FSM in Fig.\ref{fig:state machine example} does not work the first tree requires batching Input node $S_2$ while the second requires batching the Output node. This problem can be solved by introducing the phase information like the portion of nodes committed into the state encoding.

\section{PQ-tree}\label{ref:pqtree}
In this section, we illustrate the functions uncovered in Alg.\ref{alg: PQ tree Memory Arrangement} and give the proof on its time complexity.

\begin{algorithm} [h]
\scriptsize
\caption{PQ tree Memory Allocation}
\label {alg: old pqtree}
\begin{algorithmic}[1]
    \Function{BroadcastConstraint}{$tree$, $\mathcal{B}$}\label{alg:pqtree:old:pass1}
        \State visited = getSet()
        \For {$batch$ in $\mathcal{B}$}
            \If {batch in visited}
                \State continue
            \EndIf 
            \State Q = Queue()
            \State Q.push($batch$)
            \While {not Q.isEmpty()} \label{alg:pqtree:old:iteration}
                \State $b$ = Q.pop();
                \State visited.insert($b$)
                \State $cons = $ \Call{ParseConstraints}{$b$}\label{alg:pqtree:old:parseconstraint}
                \State $suc,updatedBatches = $\Call{ApplyConstraints}{$cons$,$tree$}\label{alg:pqtree:old:applyconstraints}
                \If {$suc$ is False}    
                    \State $\mathcal{B}$.erase(b)
                \Else
                    \For {$b$ in $updatedBatches$}
                        \State Q.push($b$)
                    \EndFor 
                \EndIf
            \EndWhile
        \EndFor
    \EndFunction

    \Function{DecideNodesOrder}{$tree$, $\mathcal{B}$}\label{alg:pqtree:old:pass2}
        \State $POrder$ = getUnionFindSet(tree.PNodes) \Comment{A union-find set to decide QNode's direction.}
        \State $QOrder$ = getUnionFindSet(tree.QNodes) \Comment{A union-find set to decide PNode's permutation.}
        \For {$batch$ in $\mathcal{B}$}
            \State $EquivPairs$ = \Call{ParseEquivNodeOrderPair}{$tree$, $batch$}\label{alg:pqtree:old:parse equi-rel}
            \For {$EquivPair$ in $EquivPairs$}\label{alg:pqtree:old:spread equi-rel}
                \If {$EquivPair$ is a P node pair}
                    \State POrder.Union($EquivPair$)
                \ElsIf {$EquivPair$ is a Q node pair}
                    \State QOrder.Union($EquivPair$)
                \EndIf 
            \EndFor
        \EndFor
        \State \textbf{return} $QOrder$, $POrder$
    \EndFunction 
    
    \Function{Main}{$X$, $\mathcal{B} = [batch_1, ..., batch_n]$} 
        \Comment{$X$ the variable set, $\mathcal{B}$ the batches }
        \State $tree =$ \Call{ConstructPQTree}{$X$, $\mathcal{B}$}
        \State \Call{BroadcastConstraint}{$tree$, $\mathcal{B}$}
        \State $QOrder$, $POrder$ = \Call{DecideNodesOrder}{$tree$, $\mathcal{B}$}
        \State \textbf{return} \Call{GetLeafOrder}{$tree$, $QOrder$, $POrder$}
    \EndFunction
\end{algorithmic}
\end {algorithm}

\textbf{Detailed Illustration}
The supporting functions for \textproc{BroadcastConstraint} are shown in Alg.\ref{alg:pass1 funcs}. The \textproc{FindRoot} function is supported by the \textproc{Bubble} method in the vanilla PQ tree algorithm to search the root for the minimal subtree for a set of leaf roots. The \textproc{ReduceAndGetChanged} method is supported as an extension to the \textproc{Reduce} method in the vanilla PQ tree algorithm to add a consecutive constraint to the PQ tree and record the batches whose tree structure gets changed. It needs to maintain a mapping between the P/Q node with the batches and updates it when the tree structure gets updated in the \textproc{Reduce} step.

\begin{algorithm} [h]
\scriptsize
\caption{Algorithms for functions in \textproc{BroadcastConstraint}}
\label{alg:pass1 funcs}
\begin{algorithmic}[1]
    \Function{getSubtreeCons}{$o$}
        \State $root =$ \Call{FindRoot}{$o$}
        \State $nodeToLeaves = $ \Call{getNodeToLeaves}{root} \Comment{A function maps nodes to leaves in its subtree. Realized by a traversal of the tree on the recursive function $nodeToLeaves(node) = node.isLeaf? \{node\}: \{nodeToLeaves[child] | child \in node.children\} $}
        \State $constraints = getList()$
        \For{$node$ in \Call{getNodesInSubTree}{root}}
            \If {$node$ is P-node}
                \State $cons = \cup_{child\in node.children} nodeToLeaves(child)$
                \State $constaints.push(cons)$
            \ElsIf {$node$ is Q-node} 
                \For {$child \in node.children$}
                    \State $sib = child.nextSibling()$
                    \State $cons = \cup \{nodeToLeaves(child), nodeToLeaves(sib)\}$
                    \State $constraints.push(cons)$
                \EndFor
            \EndIf 
        \EndFor 
        \State \Return $constraints$
    \EndFunction 

    \Function {ParseConstraints}{$constraints$, $batch$}
        \State $uniformConstraints = \cup_{o\in batch.operands}{\{o.index(x)|x\in getSubtreeCons(o)\}}$\Comment{Parse operand-wise consecutive constraint.}
        \State $constraints = getList()$
        \For {$o$ in $batch.operands$} \Comment{Transform constraint by alignment information.}
            \For {$cons$ in $uniformConstraint$}
                \State $constraints.append(\{o[x] | x \in cons\})$
            \EndFor
        \EndFor 
        \State \Return constraints
    \EndFunction 

    \Function{ApplyConstraints}{$constraints$, $tree$,$updatedOperands$}
        \For {$cons$ in $constraints$} 
            \State $suc  =$ \Call{ReduceAndGetChanged}{$tree$, $cons$,$updatedOperands$}
            \If {$suc$ is False} 
                \State \Return False
            \EndIf
        \EndFor
        \State \Return True
    \EndFunction 
\end{algorithmic}
\end {algorithm}

The \textproc{ParseEquivNodeOrderPair} method is given in Fig.\ref{alg: ParseEquivNodeOrderPair} to parse the equivalent node order pairs on the isomorphic tree structures for operands in a batch. It is performed by simultaneous bottom-up traversal for operands in this batch. 

\begin{algorithm} [h]
\scriptsize
\caption{ParseEquivNodeOrderPair}
\label {alg: ParseEquivNodeOrderPair}
\begin{algorithmic}[1]
    \Function{ParseEquivNodeOrderPair}{$tree$, $batch$}
        \State Q = getQueue() \Comment{Queue on equivalent nodes.}
        \For {$i$ in $batch.operands.front().size()$}
            \State Q.push(($o[i] | o \in batch.operands$))
        \EndFor 
        \State EquivNodeOrderPairs = getList()  
        \State Find the root and calculate the leaf count for the subtree of the first operand. 
        \While {True} \Comment{A leaf-to-root search performed parallel on operands in one batch.}
            \State $nodes = $ Q.pop()
            \State $node = nodes.front()$
            \If {$node$ is P node}
                \State $EquivClass = \{(node, node.referenceRrder)|node\in nodes\}$
                \State $EquivNodeOrderPairs.add((P, EquivClass))$
            \ElsIf{$node$ is Q node}
                \State $EquivClass = \{(node, getDirection(node, node.referenceOrder))|node\in nodes\}$
                \State $EquivNodeOrderPairs.add((Q, EquivClass))$
            \EndIf
            \State $node.parent.leafCnt = node.parent.leafCnt - node.leafCnt$ 
            \If {$node.parent.leafCnt $ is 0}
                \State Q.push(($node.parant|node\in nodes$))
            \EndIf 
            \For {node in nodes} \Comment{Reference Order is used to decide the node order.}
                \State node.parent.referenceOrder.append(node)
            \EndFor
            \If $node.isRoot$ \Comment{Stop Condition: Root Found.}
                \State Break.
            \EndIf 
        \EndWhile
        \State \Return EquivNodeOrderPairs
    \EndFunction 
\end{algorithmic}
\end {algorithm}

The methods concerning the Union Find data structure are listed in Alg.\ref{alg: Union-Find set}. In this problem, the UnionFindSet data structure is a set of nodes, and each node has two attributes:1. $parent$, the pointer to the node's parent, or the decider of its order;  2. $\sigma$, the transformation that transforms the node's order (a permutation for P-node or reverse for Q-node) to its parent's. Given a node, the \textproc{Find} method returns the root node of this node and the node's relative order with the root. In \textproc{Find} method, the equivalence relationship between two node order pairs, i.e. ($node_1$, $\sigma_1$), ($node_2$, $\sigma_2$), is built. The constraint conveyed is that if $node_1$ has order $\sigma$ then $node_2$ must have order $\sigma \circ \sigma_1^{-1}\sigma{2}$, and this is encoded into the data structure by building a relationship between their roots. If their roots are not the same, an edge connects them with the transformation satisfying the information (line \ref{alg:union: assign order}). If they are the same, then $node_1,node_2$'s relative order to the root must satisfy the constraint (line \ref{alg:union: compatible check}). Otherwise, the equivalence relationship is not compatible and this relationship is dropped.

\begin{algorithm} [h]
\scriptsize
\caption{Extended Union-Find set algs}
\label {alg: Union-Find set}
\begin{algorithmic}[1]
    
    \Function{getUnionFindSet}{$nodes$}
        \For {node in nodes}
            \State node.parent = node
            \State node.$\sigma$ = I \Comment{Identical transformation}
        \EndFor 
    \EndFunction
    
    \Function {Find} {$node$}
        \State $\sigma = I$ \Comment{The order relative to the root.}
        \While {$node.parent$ is not $node$}
            \State $\sigma = \sigma \circ node.\sigma$
            \State $node = node.parent$
        \EndWhile
        \State \Return node, $\sigma$
    \EndFunction 

    \Function{Union} {$node_1$, $\sigma_1$, $node_2$, $\sigma_2$} 
        \State $p_1$, $\sigma_3$ = \Call{Find}{$node_1$} \label{alg:union:findparent1}
        \State $p_2$, $\sigma_4$ = \Call{Find}{$node_2$}\label{alg:union:findparent2}
        \If {$p_1$ is not $p_2$} 
            \State $p_1.parent = p_2$
            \State $p_1.\sigma = \sigma_3^{-1}\sigma_4\sigma_2^{-1}\sigma_1$ \label{alg:union: assign order}
        \ElsIf {$\sigma_1^{-1}\sigma_2$ is $\sigma_3^{-1}\sigma_4$}\label{alg:union: compatible check} \Comment{Compatibale}
            \State Do nothing. Already equivalent.
        \Else \Comment{Incompatible.}
            \State \Return False 
        \EndIf 
        \State \Return True
    \EndFunction 
\end{algorithmic}
\end {algorithm}

Finally, we obtain the memory allocation sequence by a depth-first traversal satisfying the constraint we found on the node order (Alg.\ref{alg: GetLeaf}). 

\begin{algorithm} [h]
\scriptsize
\caption{Get Memory Allocation Order}
\label {alg: GetLeaf}
\begin{algorithmic}[1]
\Function{GetLeafOrder}{$tree$, $POrder$, $QOrder$}
    \State root = tree.root
    \State order = getList()
    \State S = getStack()
    \State S.push(root)
    \While {S.notEmpty()} \Comment{Depth first traversal}
        \State node = S.pop()
        \If {node is P node}
            \State p, $\sigma$ = POrder.find(node)
            \For {child in $\sigma(node.children)$}
                \State \Call{GetLeafOrder}{child}
            \EndFor 
        \ElsIf {node is Q node}
            \State p, $direction$ = POrder.find(node)
            \For {child in node.children following $direction$}
                \State \Call{GetLeafOrder}{child}
            \EndFor             
        \Else \Comment{Leaf Node hear}
            \State order.append(order)
        \EndIf 
    \EndWhile 
    \State \Return order
\EndFunction
\end{algorithmic}
\end {algorithm}

\textbf{Complexity}\label{pqtree complexity}

\begin{lemma}
For the batching problem, PQ tree memory allocation algorithm's time complexity is $O(\Sigma_{batch\in batches}|batch|\max^2_{batch\in batches}|batch|)$ where $|.|$ counts the operation in a batch.
\end{lemma}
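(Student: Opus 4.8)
The plan is to bound the running time phase by phase, following the decomposition of \textproc{Main} in Alg.\ref{alg: PQ tree Memory Arrangement} into \textproc{ConstructPQTree}, \textproc{BroadcastConstraint}, \textproc{DecideNodesOrder}, and \textproc{GetLeafOrder}. Throughout, write $n = \Sigma_{b\in batches}|b|$ for the total number of batched operations and $m = \max_{b\in batches}|b|$ for the largest batch, so the goal is to show the total cost is $O(n\,m^2)$. The guiding intuition is that the expensive phase is \textproc{BroadcastConstraint}, and that its cost factors, per batch $b$, as (number of times $b$ is reprocessed) $\times$ (cost of one reprocessing), which I will argue is $O(m)\times O(|b|\,m)$; summing over $b$ then gives $O(m^2\,\Sigma_b|b|)=O(n\,m^2)$.

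First I would dispose of the cheap phases. For \textproc{ConstructPQTree}, since the operator arity is a constant, each batch $b$ contributes only $O(|b|)$ in total to the consecutive-ones restrictions (a constant number of operands, each a set of size $|b|$); the total input size to the vanilla construction is therefore $O(n)$, and the vanilla PQ tree construction runs in time linear in its input, i.e.\ $O(n)$. For \textproc{DecideNodesOrder} and \textproc{GetLeafOrder}, note that \textproc{ParseEquivNodeOrderPair} performs a single synchronized bottom-up traversal over the operand subtrees of a batch, which have $O(|b|)$ nodes, producing $O(|b|)$ equivalence pairs; each pair triggers a \textproc{Union}/\textproc{Find} on a union--find structure over the $O(n)$ P- and Q-nodes, costing near-constant amortized time. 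Summed over all batches this is $O(n\,\alpha(n))$, and \textproc{GetLeafOrder} is a single $O(n)$ traversal. Both are dominated by $O(n\,m^2)$.

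The core of the argument, and the main obstacle, is the amortized analysis of \textproc{BroadcastConstraint}. Here there are two quantities to control. (i) \emph{Cost of processing a batch once.} One pop of batch $b$ calls \textproc{ParseConstraints}, which locates the minimal enclosing subtree of each operand via \textproc{Bubble} and reads off one consecutivity constraint per internal node; since the subtree spanning $|b|$ leaves has $O(|b|)$ nodes, this yields $O(|b|)$ constraints, and \textproc{ApplyConstraints} feeds each to \textproc{ReduceAndGetChanged}, whose cost is proportional to the size of the affected subtree, $O(|b|)$. Using $|b|\le m$, one processing costs $O(|b|\,m)$. (ii) \emph{Number of times a batch is reprocessed.} A batch $b$ is re-pushed only when a \textproc{Reduce} strictly changes the structure of one of its operand subtrees. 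The key lemma I must establish is a monotonicity/potential argument: every \textproc{Reduce} only \emph{refines} the PQ tree (splitting a P-node, or freezing a P-node into a Q-node), never loosens it, so the number of structural modifications localized to a fixed subtree of $O(|b|)$ nodes is $O(|b|)=O(m)$ --- each modification consumes at least one unit of a potential equal to the remaining permutational freedom of that subtree. Hence $b$ is processed $O(m)$ times.

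Combining (i) and (ii), the total work of \textproc{BroadcastConstraint} is $\Sigma_{b}\,O(m)\cdot O(|b|\,m)=O\!\left(m^2\,\Sigma_b|b|\right)=O(n\,m^2)$, which dominates the other phases and yields the claimed bound. The delicate point I expect to fight hardest for is step (ii): I must show that the re-push events triggered by propagation across batches (the \emph{updatedBatches} returned by \textproc{ApplyConstraints}) are each chargeable to a distinct structural refinement, so that the monotone decrease of the tree's permutational potential genuinely caps the reprocessing count at $O(m)$ per batch rather than admitting cascades that revisit a batch more often. Making the potential function precise, and verifying that \textproc{ReduceAndGetChanged} reports a change only when that potential strictly drops, is where the real care is needed; the remaining bookkeeping is routine.
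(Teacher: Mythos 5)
Your proposal is correct and takes essentially the same route as the paper's own proof: linear-time PQ tree construction, identifying \textproc{BroadcastConstraint} as the dominant phase and bounding its reprocessing via a monotone-refinement argument (each structural update converts a P-node to a Q-node or creates a node, so updates are capped by tree size), a per-processing cost of order $|b|\cdot\max_b|b|$, and a near-linear union--find analysis of \textproc{DecideNodesOrder}. The only difference is bookkeeping granularity --- the paper amortizes structural changes globally (total while-loop iterations $O(\Sigma_{b}|b|)$ times a worst-case per-iteration cost), whereas you amortize per batch ($O(\max_b|b|)$ re-pushes each costing $O(|b|\max_b|b|)$) --- and the ``delicate point'' you flag about charging cascaded re-pushes to distinct refinements is left at exactly the same informal level in the paper's proof.
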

\begin{proof}
The \textproc{Reduce} step on a consecutive constraint $S$ in the PQ tree is $O(|S|)$. Thus, time complexity for PQ-tree construction is $O(\Sigma_{batch\in \mathcal{B}} |batch|)$. 
In the \textproc{BroadcastConstraint} pass, the while-loop body (line \ref{alg:pqtree:old:iteration}) can only perform O($\Sigma_{batch\in \mathcal{B}} |batch|$). This is because every update on the PQ-tree structure either transfers a P-node into a Q-node or introduces a new node, the total times for updates on the tree structure are bounded by the number of internal nodes for the PQ-tree and are further bounded by the number of leaf variables.
Then, for the while-loop body, the \textproc{getSubTreeCons} method on an operand with $k$ variables needs $O(k^2)$ to compute as the \textproc{getNodeToLeaves} method needs to assign each node with the set of its leaves and the number of nodes is bounded by $k$. It is not hard to see the rest of \textproc{ParseConstraint} has lower complexity. Thus, for a batch with $m$ variables, it takes $O(mk^2)$ to compute the \textproc{ParseConstraint}. For the \textproc{ApplyConstraints} step, the \textproc{ReduceAndGetChanged} step can be implemented to have the same complexity as \textproc{Reduce}, where a bi-direction map is used to store the relationship between the node and the batch, once it a node is deleted or inserted, a callback is used to update this table in constant time. Then, \textproc{ApplyConstraints}'s complexity is bounded by the sum of variables in the constraints, which is also $O(mk^2)$. 
In all, for the \textproc{BroadcastConstraint} pass, suppose there are $n$ variables, the time complexity is $O(nmk^2)$. Under the batching setting, each node appears in the result operand of a batch once and only once, $\Sigma_{batch\in batches}|batch| = nm$, and $k \le \max_{batch\in batches}|batch|$. Thus, the time comlexity is bounded by $O(\Sigma_{batch\in batches}|batch|\max^2_{batch\in batches}|batch|)$. 

For the \textproc{DecideNodesOrder} function, \textproc{ParseEquivNodeOrderPair} on $batch$ requires $O(|batch|)$ time to traverse the graph. The $Union$ method is $O(\alpha(n))$, where $\alpha(n)$ is the extremely slow-growing inverse Ackermann function. Thus, the time complexity is $O(\Sigma_{|batch|\in batches}(|batch| + O(\alpha(n)))) \approx O(\Sigma_{|batch|\in batches}(|batch|))$.

In all, the time complexity of the PQ tree memory allocation algorithm is $O(\Sigma_{batch\in batches}|batch|\max^2_{batch\in batches}|batch|)$.

\end{proof}

\section{More On Comparison with Cortex}\label{sec:appendix:cortex}
 Cortex\cite{cortex} is highly specialized for optimizing the recursive neural network and it requires the user to express the tensor computation and specify the optimization through TVM's domain-specific language~\cite{tvm}. This framework is fundamentally different from general frameworks like \name and DyNet in that it doesn't rely on vendor libraries and the user is given a full chance to optimize computation from the graph level to the operation level. This gives expert users the chance to squeeze the performance by specializing the application to the hardware but is burdensome for common users who basically want to prototype an application. 
 
 The experiment for the comparison between \name~with Cortex performs on TreeLSTM and TreeGRU. Cortex doesn't support the LSTM-NMT model because it has a tensor-dependent control flow. We did not compare the rest of the models basically because of the lack of expertise in writing the schedules in TVM. The optimization we used in cortex includes the automated \textit{linearization} and \textit{auto-batching}. For the user-given optimizations, we did not perform kernel fusion and the individual operators were optimized by loop transformations like loop tiling, loop reorder, and axis binding. In the end, for the TreeLSTM case it takes us 30 lines of python code to specify the computations and 105 lines of TVM schedules to optimize the kernel. 













\end{appendices}

\end{document}